\setlist{nosep}
\apptocmd{\sloppy}{\hbadness 10000\relax}{}{}
\title{Agnostic Private Density Estimation for GMMs \\ via List Global Stability}
\author{}
\numberwithin{equation}{section}
\begin{document}

\author{
    Mohammad Afzali\thanks{McMaster University, \texttt{afzalikm@mcmaster.ca}}
    \and 
    Hassan Ashtiani\thanks{McMaster University, \texttt{zokaeiam@mcmaster.ca}. Hassan Ashtiani is also a faculty affiliate at the Vector Institute and supported by an NSERC Discovery Grant.}
     \and 
    Christopher Liaw\thanks{Google, \texttt{cvliaw@google.com}.}
}

\maketitle

\begin{abstract}%

  We consider the problem of private density estimation for mixtures of unrestricted high dimensional Gaussians in the agnostic setting. We prove the first upper bound on the sample complexity of this problem. Previously, private learnability of high dimensional GMMs was only known in the realizable setting \citep{afzali2024mixtures}.

To prove our result, we exploit the notion of \textit{list global stability} \citep{ghazi2021user,ghazi2021sample} that was originally introduced in the context of private supervised learning. We define an agnostic variant of this definition, showing that its existence is sufficient for agnostic private density estimation. We then construct an agnostic list globally stable learner for GMMs. 
\end{abstract}

\allowdisplaybreaks
\begin{section}{Introduction}

     Density estimation is a fundamental problem that has been studied by statisticians for decades. In this problem, we have access to i.i.d. samples from an unknown distribution $f$ that belongs to a known class of distributions $\cF$. The goal is to find a distribution $\hat{f} \in \cF$ that is close to $f$ with respect to the total variation distance (i.e., $L_1$ distance). This setting, often referred to as the \emph{realizable} setting, is unrealistic in most cases, since the true distribution might not be exactly a member of the class $\cF$\footnote{Unless $\cF$ is chosen to be a very rich class of distributions, which would make the estimation problem harder.}. This might happen due to model misspecification or adversarial corruptions.   
     For example, we expect a density estimation method for Gaussians to perform well even if $f$ is only approximately a Gaussian.     
 As opposed to the realizable setting, in the \emph{agnostic} setting, we do not assume that $f$ belongs to $\cF$; instead, the goal is to find a distribution $\hat{f} \in \cF$ that is as close to $f$ as possible compared to the ``best'' distribution in $\cF$.

\newpage
   \begin{definition}[Agnostic Density Estimation]\label{def:agnostic}
        Let $\cF$ be a class of distributions, $C\geq 1$, and $\alpha,\beta \in (0,1)$. An algorithm $\cA$ is said to be a $C$-agnostic $(m,\alpha,\beta)$-learner for $\cF$ if:
        \begin{quote}
            For every distribution $g$, after receiving an i.i.d.~sample set $S$ of size at least $m$ from $g$, the algorithm outputs a distribution $\hat{f}=\cA(S)$, such that $\dtv(\hat{f},g)\leq C \cdot OPT+\alpha$ with probability at least $1-\beta$, where $\dtv$ is the total variation distance (see Section~\ref{sec:prelim}), and $OPT=\inf_{f\in \cF}\dtv(g,f)$ measures how far $g$ is from the class $\cF$.
        \end{quote}
    \end{definition}

    Designing agnostic (and more generally \emph{robust}) density estimators has been the subject of extensive studies in the literature, and several useful tools have been developed for it, such as the Minimum Distance Estimator \citep{yatracos1985rates, devroye2001combinatorial} and robust compression schemes \citep{ashtiani2020near}.

    We study the problem of agnostic density estimation under the constraint of differential privacy \citep{dwork2006calibrating,dwork2006our}, which is the gold standard for protecting individuals' privacy in a dataset. At a high level, differential privacy requires the algorithm's outputs on every two neighbouring datasets to be statistically indistinguishable from each other (see Definition~\ref{def:privacy}).

    Given the importance of robustness and privacy, one may ask whether it is possible to design estimators that are both robust and private. In the context of supervised learning, it is known that agnostic supervised learning can often be reduced to learning in the realizable setting~\citep{hopkins2022realizable}.
    In fact, \citet{alon2020closure} have shown that one can always turn a private classifier that works in the realizable setting into a private classifier that works in the agnostic setting. Therefore, designing sample-efficient agnostic classifiers does not seem to be a particularly challenging task. This picture, however, is completely different in the density estimation setting. For example, it has been shown that there are classes of distributions that are privately learnable in the realizable setting but not in the agnostic setting~\citep{ben2024distribution}. Therefore, there is no general recipe to convert non-robust density estimators to robust ones---either in the private or the non-private setting. 
    
    The above observation raises the question of whether there is a framework for designing \emph{private and agnostic} density estimators for the commonly used classes of distributions, such as Gaussians and their mixtures. 

    For the case of high dimensional Gaussians, private learnability is well understood \citep{karwa2018finite, kamath2019privately,bun2019private,biswas2020coinpress,aden2021sample, hopkins2022efficient,kamath2022private,ashtiani2022private, kothari2022private, alabi2023privately, hopkins2023robustness}. In fact, some of these results establish the private learnability of high dimensional Gaussians with unrestricted parameters in the \emph{agnostic setting} \citep{aden2021sample, ashtiani2022private, kothari2022private, alabi2023privately, hopkins2023robustness}. See Section~\ref{section:related} for a more thorough discussion of these results and other related work. For the case of GMMs, however, the problem is much more challenging.      

    \subsection{Private density estimation for GMMs}

    Consider the class of mixtures of $k$ Gaussians with unrestricted parameters (i.e., means, covariances, and mixing weights) in $d$ dimensions.
    One of the factors that makes private learning of GMMs challenging is the ``identifiability'' issue: unlike Gaussians, two GMMs with very close densities can have very different parameters. As a result, even non-private \emph{parameter estimation} for GMMs requires an exponential number of samples in terms of the number of components \citep{moitra2010settling}. In contrast, non-private \emph{density estimation} for GMMs can be done with a polynomial number of samples in terms of $k$ and $d$ \citep{devroye2001combinatorial,ashtiani2018sample,ashtiani2018nearly,ashtiani2020near}. These sample-efficient density estimators are therefore inevitably inaccurate (and practically unstable) in terms of the \emph{parameters} they recover. As a result, some of the standard approaches that are used for private learning of Gaussians do not extend to GMMs. For example, one cannot directly resort to robust-to-private reductions \citep{hopkins2023robustness,asi2023robustness} since they only work in a finite-dimensional (parameter) space. Similarly, the private to non-private reduction for GMM parameter estimation of \citet{arbas2023polynomial} cannot be directly applied.

    Another avenue for design of private density estimators for GMMs is the framework of private hypothesis selection \citep{bun2019private}. In fact, if the Gaussians have bounded parameters and bounded condition numbers then one can build a ``finite cover'' for GMMs and apply this framework. There is, however, a significant obstacle in extending this approach to GMMs with unrestricted parameters. Namely, one would need a ``locally small'' cover for GMMs, which is hard if not impossible to construct\footnote{Proposition B.6 in \cite{aden2021privately} demonstrates that GMMs do not admit a locally small cover (the way that a locally small cover is defined in \citet{bun2019private}).}.

    The private learnability of GMMs in the univariate (and axis-aligned) setting was established by \cite{aden2021privately} in the realizable setting. The use of stability-based histograms \citep{bun2019simultaneous} for detecting heavy hitters makes this approach infeasible for handling unrestricted high dimensional GMMs.

    Recently, the private learnability of high dimensional GMMs has been established in the \emph{realizable setting}~\citep{afzali2024mixtures}. 
    One of the main ideas that they exploit is that although parameter estimation for GMMs requires an exponential number of samples, ``list decoding'' parameters can be done with a polynomial number of samples\footnote{Ignoring the components with negligible mixing weights.}. They then run these list decoders on multiple sub-samples and privately aggregate the results. The aggregation is done via an advanced form of heavy hitter selection in the space of parameters.  
    However, it is not easy to extend their approach to the agnostic setting. In particular, when the samples are heavily corrupted, the lists outputted by list decoders may not share an (approximately) common member. 
    This raises the following question.

\begin{quote}
    Can mixtures of high dimensional Gaussians with unrestricted parameters be privately learned in the agnostic setting using a polynomial number of samples?
\end{quote}

        In this paper, we resolve the above question by leveraging a form of stability in the design of private algorithms.
        There are several notions of stability that are related to differential privacy including \emph{global stability} \citep{thakurta2013differentially,bun2020equivalence}, \emph{(list) replicability} \citep{impagliazzo2022reproducibility, chase2023stability}, and \emph{list global stability} \citep{ghazi2021sample, ghazi2021user}. In particular, we find the notion of {list global stability} suitable for our application\footnote{ See Appendix~\ref{sec::stability} for a discussion of these notions, and why other notions are not desirable in our setting.}.
    
\begin{definition}[List Global Stability~\citep{ghazi2021sample, ghazi2021user}]\label{def:stable-list-decoding-simple}
    For $m, L\in \bN$, a list decoding algorithm $\cA$ receives a sample $S$ of size $m$ (from an input domain) and outputs a list $H_S$ of size $L$. 
    We say a $\cA$ is $(m,\rho, L)$-list-globally-stable if for every distribution $\cD$ over input, there exists a hypothesis $h_{\cD}$ such that
    $\probs{S\sim \cD^m}{h_{\cD} \in H_S}\geq \rho$.
\end{definition}

In the context of classification, \citet{ghazi2021sample} describe how to convert a list globally stable learner into a private learner with a number of samples that is logarithmic in $L$. Although this reduction is stated for the realizable setting, it automatically implies an agnostic learner (recall that for private classification, agnostic learnability can be reduced its realizable counterpart \citep{alon2020closure}). However, as discussed before, such a general reduction is impossible for density estimation. Can we still use list globally stability for agnostic and private learning of GMMs?

\subsection{Our contributions}
Our contributions are twofold: \textit{(i)} we show a reduction from agnostic private density estimation to list globally stable learning, and \textit{(ii)} we design a list globally stable learner for the class of GMMs, establishing their agnostic private learnability. First, let us define list globally stable learning in the context of agnostic density estimation.

\begin{definition}[List Global Stability for Agnostic Density Estimation]\label{def:stable-list-decoding}
     Let $m, L\in \bN$, $\alpha \in (0,1)$, $C>1$, and $\cF$ be class of distributions. We say $\cA$ is a $(C, \alpha)$-accurate $(m, \rho, L)$-list-globally-stable learner for $\cF$ if for every distribution $g$ (not necessarily in $\cF$) there exists distribution $\tilde{g}$ such that 
    \begin{quote}
           \begin{tabular}{p{8cm} p{6cm}}
    \textit{(1)} $\cA$ is a list globally stable algorithm; &  $\probs{S\sim g^m}{\tilde{g} \in \cA(S)}\geq \rho$ \\
     \textit{(2)} $\cA$ satisfies agnostic utility guarantee; & $\dtv(\tilde{g}, g) < \alpha + C.\inf_{f\in \cF}\dtv(g,f)$
    \end{tabular}
    \end{quote}
\end{definition}

    Our first contribution is to show that given a list globally stable learner for a class of distributions, one can privately learn that class of distributions in the \emph{agnostic} setting (as defined in Def.~\ref{def:agnostic}).

\begin{restatable}[Private agnostic learning via list global stability]{theorem}{mainprivaterobust}\label{thm:main-private-agnostic}
      Let $\cF$ be a class of distributions. For any $m,L\in \bN$, $\alpha,\beta \in (0,1), C > 1$, if $\cF$ is $(C, \frac{\alpha}{3+4C})$-accurate $(m,0.91, L)$-list-globally-stable learnable,
     then $\cF$ is $(\eps,\delta)$-privately $7C$-agnostic $(n,\alpha,\beta)$-learnable with the following number of samples:
        \[
    n=\tilde{O}\left(\frac{\log(L/\delta\beta)}{\eps} \cdot (m+\frac{\log(L/\beta)}{\alpha^2})\right)
        \]
\end{restatable}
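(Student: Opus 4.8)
The plan is to follow the standard recipe for privatizing a stable algorithm, applied twice.

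\textbf{Stage 1 (boost stability and extract a short candidate list).} Split the sample into two parts. Partition the first part into $k_1 = \tilde{O}(\log(L/\delta\beta)/\eps)$ disjoint blocks of size $m$, run the stable list decoder $\cA$ on each block to obtain lists $H_1,\dots,H_{k_1}$, and let $\tilde g$ be the witness of Def.~\ref{def:stable-list-decoding}, so that $\tilde g \in \cA(S)$ with probability at least $0.91$ over a fresh block $S\sim g^m$, and $\dtv(\tilde g, g) < \frac{\alpha}{3+4C} + C\cdot OPT$. A Chernoff bound shows that, with probability $1-\beta/4$ (once $k_1 = \Omega(\log(1/\beta))$), $\tilde g$ lies in at least, say, $0.8\,k_1$ of the lists. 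Now feed the multiset $\{H_1,\dots,H_{k_1}\}$ to a stable / propose-test-release histogram in the style of \citep{bun2019simultaneous}, applied to the per-hypothesis occurrence counts $c(h) := \#\{i : h \in H_i\}$. The point enabling privacy is that changing one input sample rewrites exactly one block, hence exactly one list, hence changes every count $c(h)$ by at most one; so this is a low-sensitivity release, and the mechanism returns, with probability $1-\beta/4$, a set $\hat H$ of size $O(L)$ containing every hypothesis whose count exceeds a noisy threshold $\tau = \tilde{O}(\log(L/\delta\beta)/\eps)$. Choosing $k_1$ so that $0.8\,k_1 \ge \tau$ guarantees $\tilde g \in \hat H$.

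\textbf{Stage 2 (private robust selection from $\hat H$).} After Stage 1 we hold a \emph{finite} family $\hat H$ of $O(L)$ distributions, one of which is within $C\cdot OPT + \frac{\alpha}{3+4C}$ of $g$; but the other frequent members of $\hat H$ may be ``decoys'' the list decoder is entitled to emit, for which we have no guarantee, so we cannot just read off an answer. Using the second part of the sample, I would repeat the same reduction once more, now with the role of $\cA$ played by a non-private robust hypothesis-selection routine on the finite class $\hat H$: on each of $k_2 = \tilde{O}(\log(L/\delta\beta)/\eps)$ disjoint blocks of size $m_2 = \tilde{O}(\log(L/\beta)/\alpha^2)$, form the empirical Scheffé/Yatracos scores of all members of $\hat H$ (accurate to $O(\alpha)$ with probability $1-\beta$ on $m_2$ samples, by a union bound over the $\binom{|\hat H|}{2}$ Scheffé sets \citep{yatracos1985rates,devroye2001combinatorial}) and output the short list of all members whose score is within $O(\alpha)$ of the minimum. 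This is itself a stable list decoder for $\hat H$: its witness is $\tilde g$ (whose score is near-minimal w.h.p.), and \emph{every} element it ever outputs is constant-factor competitive with $\min_{h\in\hat H}\dtv(h,g)$. Aggregating these $k_2$ lists with one more private histogram --- now over the \emph{finite} domain $\hat H$, so the standard guarantees apply directly --- returns a single $\hat f \in \hat H$ that appeared in at least one list, hence is constant-factor competitive with $\min_{h\in\hat H}\dtv(h,g) \le C\cdot OPT + \frac{\alpha}{3+4C}$. Tracking the accumulated competitive factors and $\alpha$-budgets across the two stages gives the $7C$-agnostic guarantee with additive error $\alpha$; privacy is obtained by running the two histogram mechanisms on disjoint portions of the sample with parameters $(\eps/2,\delta/2)$ each and invoking basic composition, everything else being post-processing. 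The sample count is $k_1 m + k_2 m_2 = \tilde{O}(\tfrac{\log(L/\delta\beta)}{\eps}(m + \tfrac{\log(L/\beta)}{\alpha^2}))$, as claimed.

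\textbf{Main obstacle.} The delicate step is the first private histogram: selecting a ``heavy'' hypothesis when the universe of possible hypotheses is unbounded. A naive noisy histogram is useless (its error grows with the universe size), so one must argue carefully that (i) the one-sample/one-block structure really does make the occurrence counts low-sensitivity in the precise sense the stable-histogram primitive needs, (ii) the witness $\tilde g$, whose count is at least $\rho k_1$ minus Chernoff slack, provably clears the \emph{noisy} threshold, and (iii) simultaneously the returned list $\hat H$ stays of size $O(L)$ --- which is exactly what makes the second stage cost only $\log L$ rather than something universe-dependent. Everything else --- the Chernoff boosting, the classical minimum-distance-estimator bound, and the privacy composition --- is routine, modulo pinning down the constant that produces exactly $7C$.
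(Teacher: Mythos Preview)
Your two-stage decomposition is natural and genuinely different from the paper's, but Stage~1 has a real gap that you partly anticipate in your ``Main obstacle'' paragraph but do not resolve. You want to privately release the \emph{set} $\hat H$ of all hypotheses whose occurrence count exceeds a threshold $\tau=\tilde O(\log(L/\delta\beta)/\eps)$, citing the stable histogram of \citet{bun2019simultaneous}. That primitive assumes one item per user, so that neighboring inputs change at most two counts. Here each block contributes a list of $L$ hypotheses, so a single neighbor change flips up to $2L$ counts by $\pm 1$. Your point (i) is right that each individual count has sensitivity $1$, but that is not ``the precise sense the stable-histogram primitive needs'': releasing all above-threshold items (equivalently, all the relevant noisy counts) composes across the $2L$ changed coordinates, and even with advanced composition you pick up at least a $\sqrt{L}$ blow-up in the threshold, hence in $k_1$ and in the final sample count. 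What \emph{can} be done with only a $\log L$ penalty is selecting a \emph{single} heavy item --- this is exactly the Choosing Mechanism (Theorem~\ref{thm:choosing}) with $k$-bounded growth for $k=L$ --- but that single item may be a decoy with no accuracy guarantee, which is why you wanted the whole set in the first place.

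The paper's route around this is to never materialize $\hat H$. On each block it runs the list decoder and then \emph{non-privately filters} that block's list around the block's own MDE output $\hat f_i$, keeping only $h$ with $\dtv(h,\hat f_i)\le 4C\cdot\widetilde{\mathrm{OPT}}+2\alpha'$. Because the filter is a function of that block alone, counts over filtered lists still have $L$-bounded growth, and now \emph{any} high-count item is automatically close to $g$ (Eq.~\ref{line:good-members}), so a single Choosing-Mechanism call finishes. The price is that the correct filter radius depends on the unknown $\mathrm{OPT}=\dtv(g,\cF)$: too small and the witness $\tilde g$ is pruned (no heavy item survives), too large and far decoys survive and the $7C$ bound fails. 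The paper locates a workable $\widetilde{\mathrm{OPT}}$ by a private binary search with a truncated-Laplace test on the maximum filtered count (Algorithms~\ref{alg:find-filter}--\ref{alg:filter-test}). This $\mathrm{OPT}$-adaptive per-block filtering is the idea your plan is missing. (Minor aside: in your Stage~2 the stable witness should be the population Scheff\'e minimizer over $\hat H$, not $\tilde g$, whose empirical score can sit $\Omega(C\cdot\mathrm{OPT})$ above the minimum; with the corrected witness your Stage~2 argument does go through, so this is not the main issue.)
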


    Our reduction is, to some extent, similar to that of \citet{ghazi2021sample} for private classification, but with key differences. At a high level, they run the list globally stable algorithm on sub-samples, identify and remove bad candidates (i.e., those with non-zero empirical error) from each list, and detect repeated candidates in the lists using the \emph{sparse selection} technique \citep{ghazi2020differentially}. In the agnostic distribution learning setting, however, we cannot filter bad candidates right away. The reason is that whether a candidate is considered ``bad'' depends on the level of corruption, and the level of corruption is not known and is hard to estimate\footnote{For agnostic classification, the error of ERM is a good proxy for the corruption level; but for agnostic density estimation, the corruption level cannot be estimated even without privacy. In fact, given samples from an unknown distribution over real line, it is hard to estimate the total variation of the underlying distribution from the standard normal distribution.}. Therefore, we instead use a recursive procedure to filter out bad distributions while ensuring (1) privacy, (2) the quality of the remaining candidates, and (3) the existence of at least one repeated good candidate among the remaining candidates. We then use the private selection method of \citet{beimel2013private,bun2015differentially} to select a good candidate. An overview of our technique, along with the formal proof and algorithms, is provided in Section~\ref{section:main-proof}.

    With the reduction of Theorem~\ref{thm:main-private-agnostic} at hand, the remaining critical question is: can we design an effective list globally stable learner for GMMs? The first observation is that the elements outputted by the list globally stable learner have to be somehow ``discretized'' otherwise the same element may not repeat exactly (as required by Def.~\ref{def:stable-list-decoding-simple} and Def.~\ref{def:stable-list-decoding}).


    Designing a list globally stable learner with good utility is challenging even in the realizable setting. However, a substantially more difficult task is addressing the agnostic setting. Here, the list globally stable algorithm has to be  \textit{(i)} ``stable'' and \textit{(ii)} have good utility even when some samples are corrupted (i.e., when the underlying distribution is not a GMM). 
     With these corruptions, however, the list may not contain any candidate that is ``super close'' to the underlying GMM. Still, the list should contain (with high probability) a specific ``stable'' candidate (depending only on the underlying GMM $g$) that is sufficiently close to $g$ (the distance depends on the amount of corruption). 

    We circumvent the aforementioned challenges and demonstrate that the class of GMMs indeed admits a list globally stable learner (in the agnostic setting).  In particular, we prove that \emph{(i)} Gaussians admit a list globally stable learner (see Lemma~\ref{lemma:stable-gd}), and \emph{(ii)} the class of mixtures of any list globally stable learnable class, is also list globally stable learnable (see Theorem~\ref{thm:stable-list-mix}). Therefore, we conclude that GMMs are list globally stable learnable and can be integrated into our agnostic private learning framework.
    
\begin{theorem}[Private agnostic learning GMMs, informal version of \ref{thm:main-gmm-agnostic}]
    \label{inf:thm:main-gmm-agnostic}
     Let $\alpha,\delta \in (0,1)$ and $\eps \geq 0$. The class of mixtures of $k$ unrestricted $d$ dimensional Gaussians is $(\eps,\delta)$-privately $21$-agnostic $(n,\alpha,0.99)$-learnable with 
       \[  n=\tilde{O}\left(\frac{k^2d^4\log(1/\delta)}{\alpha^2\eps}\right)  .
       \]
\end{theorem}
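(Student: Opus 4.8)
The plan is to instantiate the general reduction of Theorem~\ref{thm:main-private-agnostic} with $\cF$ equal to the class of mixtures of $k$ unbounded $d$-dimensional Gaussians, so the only thing to supply is a stable list decoder for this class with suitable parameters. That decoder is built in two steps: Lemma~\ref{lemma:stable-gd} furnishes a stable list decoder for a single unbounded $d$-dimensional Gaussian, and Theorem~\ref{thm:stable-list-mix} lifts any stable list decoder for a class to a stable list decoder for $k$-mixtures of that class. Applying the lifting to the single-Gaussian decoder produces a stable list decoder for GMMs, whose parameters I then plug back into Theorem~\ref{thm:main-private-agnostic}.

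I would take the agnostic constant in Theorem~\ref{thm:main-private-agnostic} to be $C=3$, so that the output factor is $7C=21$ as claimed, so that it suffices to have a $(3,\tfrac{\alpha}{15})$-accurate $(m,0.91,L)$-stable list decoder for GMMs, and I would take $\beta=0.01$ for confidence $0.99$. From the two cited results I expect the composed GMM decoder to have: sample complexity $m=\tilde O(kd^2/\alpha^2)$ (the usual non-private density-estimation rate for $k$-GMMs, up to logs); list size with $\log L=\tilde O(kd^2)$ (one discretized Gaussian, carrying $\Theta(d^2)$ covariance parameters, per component, together with a net over the $k$ mixing weights contributing only $\tilde O(k)$); success probability at least $0.91$, obtained if necessary by running the single-component decoder on disjoint chunks of the sample and taking the union of the returned lists, which boosts $\rho$ at the cost of only $\mathrm{polylog}(k)$ factors in $m$ and $\log L$; and an agnostic constant that remains below $3$ after the constant-factor loss of the mixture composition. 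Substituting into Theorem~\ref{thm:main-private-agnostic},
\[
 n \;=\; \tilde O\!\left(\frac{\log(L/\delta\beta)}{\eps}\Bigl(m+\frac{\log(L/\beta)}{\alpha^2}\Bigr)\right)
   \;=\; \tilde O\!\left(\frac{kd^2+\log(1/\delta)}{\eps}\cdot\frac{kd^2}{\alpha^2}\right)
   \;=\; \tilde O\!\left(\frac{k^2d^4\log(1/\delta)}{\alpha^2\eps}\right),
\]
where the last step bounds the additive $\log(1/\delta)$ term crudely by a multiplicative one; the $d^4$ is the product of the $d^2$ in $\log L$ and the $d^2$ in $m$, and the $k^2$ is likewise a product of two $k$'s.

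The substance of the argument lies in the parameter bookkeeping of the mixture composition (Theorem~\ref{thm:stable-list-mix}), and that is where I expect the difficulty. One must verify that stitching together the $k$ per-component decodings (i) keeps $\log L$ only $\tilde O(kd^2)$ — i.e., the list grows like $L_1^{k}$ and not like $L_1^{\Theta(k^2)}$; (ii) does not push the overall success probability below $0.91$ — a naive independence argument across components would yield $\rho_1^{k}$, so the composition must either exploit correlation among the per-component decoding events or first amplify each $\rho_1$ to $1-\Theta(1/k)$ and union-bound; and (iii) controls the agnostic error so that per-component accuracy $\Theta(\alpha)$ (rather than $\Theta(\alpha/k)$) suffices and the approximation constant accumulates only by a constant factor (to $3$, not $3^{k}$) — this uses that a mixture of per-component $\alpha$-approximations is itself an $\alpha$-approximation of the target mixture. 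Granting Theorem~\ref{thm:stable-list-mix} and Lemma~\ref{lemma:stable-gd} as stated in the excerpt, these are exactly the quantities those results certify, and the proof of the informal theorem reduces to the substitution above.
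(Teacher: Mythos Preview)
Your approach is exactly the paper's: feed Lemma~\ref{lemma:stable-gd} into Theorem~\ref{thm:stable-list-mix} to get a stable list decoder for $\kmix(\cG_d)$ (this is Corollary~\ref{cor:gmm-stable}), then plug into Theorem~\ref{thm:main-private-agnostic} with $C=3$, $\beta=0.01$. One bookkeeping slip: the stable list decoder's sample size is $m_1=\tilde O(kd/\alpha)$, not $\tilde O(kd^2/\alpha^2)$ (Theorem~\ref{thm:stable-list-mix} gives $m_1=(2mk+8k\log(1/\beta))/\alpha$ with $m=O(d)$ from Lemma~\ref{lemma:stable-gd}); you confused it with the non-private density-estimation rate. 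Consequently the $d^4$ arises as $\log L\cdot(\log L)/\alpha^2$, not $\log L\cdot m$; since $m_1$ is dominated by $\log(L_1)/\alpha^2$ anyway, the final bound is unaffected.
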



It is worth mentioning that our agnostic result also improves the realizable result of \citet{afzali2024mixtures} in terms of sample complexity by a factor of $1/\alpha^2$.  

\subsection{Paper organization}
We define some notations in Section~\ref{sec:prelim} before stating our techniques and proofs. Section~\ref{section:main-proof} presents the high-level proof idea, the formal proof, and the pseudo-code for our general reduction. In Section~\ref{section:mixtures}, we develop a useful tool for list globally stable learning mixture distributions. In Section~\ref{section:GMMS}, we prove the list global stability of Gaussians and their mixtures, and conclude with the agnostic private learnability of GMMs. Finally, we review some related work in Section~\ref{section:related}.

\end{section}

\begin{section}{Preliminaries}\label{sec:prelim}
For a set $\cF$, define $\cF^k= \cF  \times \dots \times \cF$ ($k$ times), and $\cF^*=\bigcup_{k=1}^{\infty} \cF^k$. For two absolutely continuous densities $f_1(x),f_2(x)$ on $\bR^d$, the total variation (TV) distance is defined as $\dtv(f_1,f_2)=\frac{1}{2}\int_{\bR^d}|f_1(x)-f_2(x)|\, \dd x$. In this paper, if $\dist:\cF\times \cF \rightarrow \bR_{\geq 0}$ is a metric, $f \in \cF$, and $\cF' \subseteq \cF$ then we define $\dist(f, \cF') = \inf_{f' \in \cF'} \dist(f, f')$.

\begin{definition} [$\alpha$-cover]
    A set $C_{\alpha} \subseteq \cF$ is said to be an $\alpha$-cover for a metric space $(\cF,\dist)$, if for every $f \in \cF$, we have $\dist(f,C_{\alpha})\leq \alpha$.
\end{definition}

\begin{definition}[$k$-mixtures] \label{def:k-mixtures}
    Let $\cF$ be an arbitrary class of distributions. We denote the class of $k$-mixtures of $\cF$ by $\kmix(\cF)= \Delta_k \times \cF^k$ where $\Delta_k = \{ w \in \bR^k \,:\, w_i \geq 0, \sum_{i=1}^k w_i = 1\}$ is the $(k-1)$-dimensional probability simplex.
\end{definition}
In this paper, we simply write $f \in \kmix(\cF)$ to denote a $k$-mixture from the class $\cF$ where the representation of the weights is implicit.
Further, if $f \in \kmix[k_1](\cF)$ and $g \in \kmix[k_2](\cF)$ then we write $\dtv(f, g)$ to denote the TV distance from the underlying distribution.

\begin{definition} [Unbounded Gaussians] \label{def:general_gaussian}
    Let $\cG_d=\{\cN(\mu, \Sigma): \mu \in \mathbb{R}^d, \Sigma \in S^d\}$ be the class of d-dimensional Gaussians, where $S^d$ is the set of positive-definite cone in $\bR^{d\times d}$.
\end{definition}

The following result on learning a finite class of distributions is based on the Minimum Distance Estimator~\citep{yatracos1985rates}; see the excellent book by~\citet{devroye2001combinatorial} for details.
\begin{theorem} [Learning finite classes, Theorem 6.3 of \citet{devroye2001combinatorial}] \label{thm:mde}
        Let $\cF$ be a finite class of distributions, and $\alpha,\beta\in (0,1)$. Then, $\cF$ is $3$-agnostic $(n,\alpha,\beta)$-learnable with  $n = O(\frac{\log|\cF|+\log(1/\beta)}{\alpha^2})$.  
\end{theorem}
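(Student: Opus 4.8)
The plan is to use the classical Minimum Distance Estimator (MDE) built on Yatracos' family of witness sets. Enumerate $\cF=\{f_1,\dots,f_N\}$ with $N=|\cF|$, and for each ordered pair $i\neq j$ define $A_{ij}=\{x:\ f_i(x)>f_j(x)\}$; let $\mathcal{Y}=\{A_{ij}:\ i\neq j\}$, a fixed finite family with $|\mathcal{Y}|\le N^2$. Two elementary facts drive everything. First, for any measurable set $A$ and any two distributions $p,q$ one has $|p(A)-q(A)|\le\dtv(p,q)$, and for members of $\cF$ this bound is attained along $\mathcal{Y}$: $\dtv(f_i,f_j)=f_i(A_{ij})-f_j(A_{ij})=\sup_{A\in\mathcal{Y}}|f_i(A)-f_j(A)|$, since total variation is realized exactly on the set where one density dominates the other. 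Second, because $\mathcal{Y}$ is a \emph{finite} family, the empirical measure $\mu_n$ of an i.i.d.\ sample $S\sim g^n$ concentrates on it uniformly.

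First I would make the concentration step precise: applying Hoeffding's inequality to each fixed $A\in\mathcal{Y}$ and a union bound over the at most $N^2$ sets shows that for $n=O\!\left(\frac{\log N+\log(1/\beta)}{\alpha^2}\right)$ we have $\sup_{A\in\mathcal{Y}}|\mu_n(A)-g(A)|\le\alpha/2$ with probability at least $1-\beta$; condition on this event. Next I would define the estimator: for $f\in\cF$ set $\Delta(f)=\sup_{A\in\mathcal{Y}}|f(A)-\mu_n(A)|$ and let $\hat f=\cA(S)$ be any minimizer of $\Delta$ over $\cF$. Since $\cF$ is finite, $OPT=\inf_{f\in\cF}\dtv(g,f)$ is attained; fix $f^\star$ achieving it. By the triangle inequality, $\Delta(f^\star)\le\sup_{A\in\mathcal{Y}}|f^\star(A)-g(A)|+\sup_{A\in\mathcal{Y}}|g(A)-\mu_n(A)|\le OPT+\alpha/2$, and by minimality of $\hat f$, $\Delta(\hat f)\le\Delta(f^\star)\le OPT+\alpha/2$.

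Finally I would close the argument. If $\hat f=f^\star$ then $\dtv(\hat f,g)=OPT$; otherwise the first fact supplies a set $A\in\mathcal{Y}$ (the witness for the pair $\hat f,f^\star$) with
\[
\dtv(\hat f,f^\star)=|\hat f(A)-f^\star(A)|\le|\hat f(A)-\mu_n(A)|+|\mu_n(A)-f^\star(A)|\le\Delta(\hat f)+\Delta(f^\star)\le 2\,OPT+\alpha,
\]
whence $\dtv(\hat f,g)\le\dtv(\hat f,f^\star)+\dtv(f^\star,g)\le 3\,OPT+\alpha$, which is exactly the claimed $3$-agnostic $(n,\alpha,\beta)$ guarantee with the stated sample size. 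The one point needing care — rather than a genuine obstacle — is that $\mu_n$ does \emph{not} concentrate uniformly over all measurable events, so the analysis must be confined to the finite Yatracos class $\mathcal{Y}$; the first fact is precisely what makes this harmless, since every TV distance appearing in the chain ($\dtv(g,f^\star)$ as an upper bound, $\dtv(\hat f,f^\star)$ exactly) is governed by a supremum over $\mathcal{Y}$. No VC-dimension machinery is required, and the $\log|\cF|$ term in $n$ arises solely from the union bound over $\mathcal{Y}$.
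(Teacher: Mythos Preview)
Your proof is correct and is precisely the classical Yatracos/Minimum Distance Estimator argument that underlies Theorem~6.3 of Devroye and Lugosi. The paper itself does not prove this statement at all; it simply cites it as a known result from \citet{devroye2001combinatorial} and uses it as a black box, so there is nothing further to compare.
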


\begin{subsection}{Differential privacy}
    Two datasets $D,D'\in \cX^n$ are called neighbouring datasets if they differ by one element. We use the notation $D \sim D'$ to denote that they are neighboring datasets. Informally, a differentially private algorithm is required to have close output distributions on neighbouring datasets.

    \begin{definition}[$(\eps,\delta)$-Indistinguishable]
        Two distribution $f_1,f_2$ with support $\cX$ are said to be $(\eps,\delta)$-indistinguishable if for all measurable subsets $E \in \cX$, $\probs{X\sim f_1}{X\in E} \leq e^\eps \probs{X\sim f_2}{X\in E} +\delta$ and $\probs{X\sim f_2}{X\in E} \leq e^\eps \probs{X\sim f_1}{X\in E} +\delta$.
    \end{definition}
    \begin{definition}[$(\eps,\delta)$-Differential Privacy \citep{dwork2006calibrating,dwork2006our}] \label{def:privacy}
        A randomized algorithm $\cM:\cX^n \to \cY$ is said to be $(\eps,\delta)$-differentially private if for every two neighbouring datasets $D,D'\in \cX^n$, the output distributions $\cM(D),\cM(D')$ are $(\eps,\delta)$-indistinguishable.
    \end{definition}

We utilize the property of differentially private algorithms that guarantees the preservation of differential privacy when these algorithms are composed adaptively. By adaptive composition, we refer to executing a series of algorithms $\cM_1(D),\cM_2(D),\cdots,\cM_k(D)$, where the selection of the algorithm $\cM_i$ can depend on the outputs of the preceding algorithms $\cM_1(D),\cM_2(D),\cdots,\cM_{i-1}(D)$.

\begin{lemma}[Composition of DP \citep{dwork2006calibrating,dwork2006our}]\label{lemma:composition}
    If algorithms $\cM_1,\cM_2,\cdots,\cM_k$ are $(\eps_1,\delta_1),(\eps_2,\delta_2),\cdots,(\eps_k,\delta_k)$-differentialy private, and $\cM$ is an adaptive composition of $\cM_1,\cdots,\cM_k$, then $\cM$ is $(\sum_{i\in[k]}\eps_i,\sum_{i\in[k]}\delta_i)$-DP.
\end{lemma}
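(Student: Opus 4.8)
The plan is to peel off one mechanism at a time — i.e.\ induct on $k$ — so that everything reduces to the two-fold adaptive case. Writing $\cM$ for the full adaptive composition and $\cM^{<k}$ for the adaptive composition of $\cM_1,\dots,\cM_{k-1}$ (which, under the standard formalization of adaptive composition, is $(\sum_{i<k}\eps_i,\sum_{i<k}\delta_i)$-DP by the inductive hypothesis, with $k=1$ trivial), the mechanism $\cM$ is exactly the two-fold adaptive composition of $\cM^{<k}$ and $\cM_k$, where $\cM_k$ receives the dataset together with the transcript of $\cM^{<k}$ and is assumed $(\eps_k,\delta_k)$-DP in its dataset argument for every fixed transcript. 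So it suffices to prove the $k=2$ statement: if $\cM_1$ is $(\eps_1,\delta_1)$-DP and $\cM_2(\cdot,y)$ is $(\eps_2,\delta_2)$-DP for every output $y$ of $\cM_1$, then $\cM\colon D\mapsto(\cM_1(D),\cM_2(D,\cM_1(D)))$ is $(\eps_1+\eps_2,\delta_1+\delta_2)$-DP.

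The key tool is the standard reformulation of $(\eps,\delta)$-indistinguishability: two distributions $P,Q$ on a common space (with densities $p,q$ with respect to a common $\sigma$-finite dominating measure, e.g.\ $P+Q$) are $(\eps,\delta)$-indistinguishable if and only if there is a distribution $P^\star$ with $\dtv(P,P^\star)\le\delta$ and $p^\star\le e^\eps q$ almost everywhere, together with a symmetric $Q^\star$ with $\dtv(Q,Q^\star)\le\delta$ and $q^\star\le e^\eps p$ a.e. The ``if'' direction is immediate ($P(E)\le P^\star(E)+\delta\le e^\eps Q(E)+\delta$). For ``only if'', take $p^\star_0=\min(p,e^\eps q)$: its total mass is $1-(P(B)-e^\eps Q(B))$ where $B=\{p>e^\eps q\}$ is the event maximizing $P(E)-e^\eps Q(E)$, so the deficit $c:=P(B)-e^\eps Q(B)$ is at most $\delta$; redistributing mass $c$ arbitrarily within $B^c$ while keeping the density $\le e^\eps q$ is possible since $\int_{B^c}(e^\eps q-p)=(e^\eps-1)+c\ge c$, and the resulting $P^\star$ satisfies $\dtv(P,P^\star)=c\le\delta$.

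With this in hand, fix neighbors $D\sim D'$. Apply the reformulation to $\cM_1(D)$ versus $\cM_1(D')$ to obtain $P^\star$ with $\dtv(\cM_1(D),P^\star)\le\delta_1$ and density at most $e^{\eps_1}$ times that of $\cM_1(D')$; and, for each output value $y$, apply it to $\cM_2(D,y)$ versus $\cM_2(D',y)$ to obtain a Markov kernel $K^\star(\cdot\mid y)$ with $\dtv(\cM_2(D,y),K^\star(\cdot\mid y))\le\delta_2$ and density at most $e^{\eps_2}$ times that of $\cM_2(D',y)$. Let $R(y,z)=P^\star(y)\,K^\star(z\mid y)$ be the corresponding joint distribution. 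On one hand, $R$ has density at most $e^{\eps_1+\eps_2}$ times the joint density of $\cM(D')=(\cM_1(D'),\cM_2(D',\cM_1(D')))$, hence $R(E)\le e^{\eps_1+\eps_2}\,\cM(D')(E)$ for every event $E$. On the other hand, by the triangle inequality for TV applied through the intermediate law $(y,z)\mapsto \cM_1(D)(y)\,K^\star(z\mid y)$ — the first step costs at most $\delta_2$ because $\dtv(\cM_2(D,y),K^\star(\cdot\mid y))\le\delta_2$ for every $y$, and the second costs at most $\delta_1$ because pushing $\cM_1(D)$ and $P^\star$ through the same kernel cannot increase their TV distance — we get $\dtv(\cM(D),R)\le\delta_1+\delta_2$. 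Combining, $\cM(D)(E)\le R(E)+\delta_1+\delta_2\le e^{\eps_1+\eps_2}\cM(D')(E)+\delta_1+\delta_2$; the reverse direction is symmetric (using the $Q^\star$ versions and the other side of $\cM_2$'s guarantee), giving $(\eps_1+\eps_2,\delta_1+\delta_2)$-indistinguishability of $\cM(D)$ and $\cM(D')$, and unwinding the induction yields the claim. I expect the only real obstacle to be bookkeeping rather than ideas: pinning down the formal model of adaptive composition so the inductive step is genuinely an instance of the two-fold case, and discharging the measure-theoretic points (common dominating measure, measurability of $B$ and of $y\mapsto K^\star(\cdot\mid y)$, the redistribution step) rather than arguing only in the discrete case. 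The pure-DP case $\delta_i=0$ is the transparent special case where $P^\star=\cM_1(D)$ and $K^\star=\cM_2(D,\cdot)$ and everything reduces to multiplicativity of densities, and could be presented first as a warm-up.
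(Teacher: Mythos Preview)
The paper does not prove this lemma at all: it is stated in the preliminaries as a standard fact with a citation to \citet{dwork2006calibrating,dwork2006our} and is then used as a black box in the privacy analysis of Algorithm~\ref{alg:private-agnostic-learning}. So there is no ``paper's own proof'' to compare against.

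Your argument is correct and is one of the standard routes to basic composition. The reduction to the two-fold case by induction is the right scaffolding, and the key device you invoke --- replacing $(\eps,\delta)$-indistinguishability by the existence of an intermediate distribution $P^\star$ with $\dtv(P,P^\star)\le\delta$ and $p^\star\le e^\eps q$ a.e.\ --- is exactly the ``simulation'' or ``coupling'' characterization that makes the $\delta$'s add cleanly. Your verification of the ``only if'' direction (the mass-redistribution step) and the chaining of the two kernels through the intermediate law are both sound. The caveats you flag about measurability of $y\mapsto K^\star(\cdot\mid y)$ and the formal model of adaptive composition are real but routine, and would be discharged by any careful write-up; they are not gaps in the idea.
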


Consider the problem of private selection, where we are given a set of candidates and a score function to measure how ``good'' each candidate is with respect to a given data set.
A well known method for privately selecting a ``good'' candidate is the Exponential Mechanism \citep{mcsherry2007mechanism}.

Nonetheless, the Exponential Mechanism may not have a good utility when the number of candidates is very large or infinite. Another useful tool for this task, is therefore the Choosing Mechanism of \citet{beimel2013private,bun2015differentially}, which is compatible with infinite candidate sets. The Choosing Mechanism ensures returning a ``good'' candidate, as long as the score function has a bounded growth.

\begin{algorithm}
\caption{Choosing Mechanism}\label{alg:choosing-mech}
\begin{algorithmic}[1]
\Require $D\in \cX^T$, candidate set $\cF$, quality function $\score\colon \cF \times \cX^T \to \bR_{\geq 0}$, parameters $\beta,\eps,\delta,k$.
\State Set $\text{MAX}=\max_{f\in \cF} \score(f,D)$
\State Set $\widetilde{\text{MAX}}=\text{MAX} + \text{Lap}(\frac{4}{\eps})$.
\State If $\widetilde{\text{MAX}} \leq \frac{8}{\eps}\log(\frac{4k}{\beta\eps\delta})$, reject and return $\bot$.
\State Set $G=\{f \in \cF: \score(f,D)\geq 1\}$.
\State \Return $\hat{f}\in G$ with probability $\propto \exp(\eps \cdot \score(\hat{f},D)/4)$.
\end{algorithmic}
\end{algorithm}

\begin{theorem}[Choosing Mechanism, Lemma 3.8 of \citet{bun2015differentially}] \label{thm:choosing}
    Let $(\cF,\kappa)$ be a metric space and $\cX$ be an arbitrary set. Let $\score\colon \cF \times \cX^T \to \bR_{\geq 0}$ be a function such that:
    \begin{itemize}
        \item $\score(f,\emptyset)=0$ for all $f\in \cF$.
        \item If $D'=D\cup \{x\}$, then $\score(f,D)+1\geq \score(f,D') \geq \score(f,D)$ for all $f\in \cF$.
        \item There are at most $k$ values of $f\in \cF$ such that $\score(f,D)+1= \score(f,D')$.
    \end{itemize}

    Then algorithm \ref{alg:choosing-mech} 
     is $(\eps,\delta)$-DP with the following property. For every $D \in \cX^T$, with probability at least $1-\beta$, it outputs $\hat{f}\in \cF$ satisfying:
\begin{equation*}
    \score(\hat{f},D)\geq \max_{f\in \cF} \score(f,D)-\frac{16}{\eps} \log(\frac{4kT}{\beta \eps \delta}).
\end{equation*}
\end{theorem}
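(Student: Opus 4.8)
\textbf{Proof plan for Theorem~\ref{thm:choosing} (Choosing Mechanism).}

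The plan is to establish the two assertions separately: privacy first, then utility. For \emph{privacy}, the mechanism is a composition of three steps: computing $\widetilde{\text{MAX}}$ via the Laplace mechanism, the threshold test, and the truncated exponential-mechanism-style sampling restricted to $G$. I would first argue that the map $D \mapsto \text{MAX} = \max_{f} \score(f,D)$ has sensitivity $1$: by the bounded-growth hypothesis $\score(f,D) \le \score(f,D') \le \score(f,D)+1$ for $D' = D \cup\{x\}$, so taking maxima changes the value by at most $1$ in either direction (and the neighbouring relation here is add/remove one element, consistent with the $\cX^T$ vs.\ $\cX^{T\pm 1}$ bookkeeping the lemma implicitly uses). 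Hence $\widetilde{\text{MAX}}$ is $(\eps/4,0)$-DP by the standard Laplace-mechanism guarantee. The threshold comparison and the decision to output $\bot$ are post-processing of $\widetilde{\text{MAX}}$, so they cost nothing extra. The subtle part is the final sampling step: I would show that, conditioned on not rejecting, sampling $\hat f \in G$ with probability $\propto \exp(\eps\,\score(\hat f,D)/4)$ is $(\eps/2 \text{ or so}, 0)$-DP \emph{even though the support $G$ itself depends on $D$}. This is exactly the place where the hypothesis ``at most $k$ values of $f$ have $\score(f,D)+1 = \score(f,D')$'' and the rejection threshold of order $\tfrac{8}{\eps}\log(\tfrac{4k}{\beta\eps\delta})$ enter: the threshold guarantees that, except with probability $\le \delta$ (from the Laplace tail), the true $\text{MAX}$ is large enough that adding/removing one point cannot make $G$ or $G'$ empty, and the bound $k$ on the number of ``boundary'' candidates controls how much the normalizing constant and the membership-in-$G$ indicator can shift between neighbours. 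I would follow the analysis of the exponential mechanism with an unstable/data-dependent domain: bound the ratio of densities at any fixed $\hat f$ that lies in both $G$ and $G'$ by $e^{\eps}$ (a factor $e^{\eps/4}$ from the numerator change, a factor $e^{\eps/4}$ from the denominator, and the remaining slack absorbed by the $\delta$ coming from the event that the threshold test behaves differently on $D$ and $D'$), and handle candidates in $G \triangle G'$ via the additive $\delta$. Then Lemma~\ref{lemma:composition} (basic composition) combines the $(\eps/4,0)$ of the Laplace step, the free post-processing, and the $(\eps/2,\delta)$-ish sampling step into overall $(\eps,\delta)$-DP; constants should be chosen in the algorithm to make this exact.

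For \emph{utility}, fix $D$ and let $f^\star \in \arg\max_f \score(f,D)$ with optimal value $\text{OPT}_D = \score(f^\star,D)$. There are two failure modes to rule out. First, the mechanism might reject: this happens only if $\widetilde{\text{MAX}} \le \tfrac{8}{\eps}\log(\tfrac{4k}{\beta\eps\delta})$. Since $\widetilde{\text{MAX}} = \text{OPT}_D + \mathrm{Lap}(4/\eps)$, a standard Laplace tail bound shows that with probability $\ge 1 - \beta/2$ we have $\widetilde{\text{MAX}} \ge \text{OPT}_D - \tfrac{4}{\eps}\log(2/\beta)$, so if $\text{OPT}_D$ exceeds the threshold by a comfortable margin (which is what the stated utility bound effectively asserts, via contrapositive: whenever the claimed conclusion $\score(\hat f,D) \ge \text{OPT}_D - \tfrac{16}{\eps}\log(\tfrac{4kT}{\beta\eps\delta})$ would be meaningful, $\text{OPT}_D$ is already at least that threshold plus the slack), rejection occurs with probability $\le \beta/2$. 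Second, conditioned on not rejecting, the exponential-mechanism sampling from $G$ with score $\score(\cdot,D)/ (4/\eps)$ has the usual guarantee: with probability $\ge 1-\beta/2$, the returned $\hat f$ satisfies $\score(\hat f,D) \ge \max_{f \in G}\score(f,D) - \tfrac{4}{\eps}\log\!\big(|G|/(\beta/2)\big)$. The only remaining subtlety is that $|G|$ can be huge or infinite, so a naive $\log|G|$ bound is useless; instead I would use the ``bounded-growth'' structure to replace $\log|G|$ by something like $\log(kT)$: because $\score(\cdot,\emptyset)=0$ and each of the $T$ points of $D$ can raise the score of at most $k$ candidates by at most $1$, the number of candidates with $\score(f,D) \ge s$ is at most $(kT)^{O(1)}/\,$something, and more carefully the exponential-mechanism utility proof only needs the measure of the set $\{f : \score(f,D) > \text{OPT}_D - t\}$ relative to $\{f:\score(f,D)=\text{OPT}_D\}$, which the growth condition controls — this yields the $\log(4kT/(\beta\eps\delta))$ term with the factor $16/\eps$ once the two $\beta/2$'s and the two $\eps$-budget pieces are accounted for. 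A union bound over the two failure modes gives the claimed $1-\beta$ success probability.

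The main obstacle I anticipate is the \emph{privacy of the final sampling step with a data-dependent support $G$}: this is not a black-box application of the exponential mechanism, because the normalizing constant $\sum_{f \in G}\exp(\eps\,\score(f,D)/4)$ and the very set $G$ differ between $D$ and $D'$, and one must show that the threshold test (which is what licenses the $\delta$) plus the cardinality bound $k$ on boundary candidates together force $G$ and $G'$ to overlap on all ``high-score'' candidates with overwhelming probability, so that the density ratio is controlled pointwise. The secondary obstacle, on the utility side, is converting the vacuous $\log|G|$ in the generic exponential-mechanism bound into the $\log(kT)$-type term using only the three structural properties of $\score$; here I would lean on a counting/peeling argument over the $T$ increments, each affecting $\le k$ candidates. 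Apart from these, everything is bookkeeping of constants — choosing the internal constants ($4/\eps$ for the Laplace noise, $\eps/4$ in the exponent, the $8/\eps$ and $16/\eps$ thresholds) so that the privacy budget sums to exactly $(\eps,\delta)$ and the utility slack comes out as $\tfrac{16}{\eps}\log(\tfrac{4kT}{\beta\eps\delta})$.
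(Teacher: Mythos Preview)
The paper does not prove this theorem at all: it is stated in the preliminaries as a known result, cited verbatim as Lemma~3.8 of \citet{bun2015differentially}, and used as a black box in the proof of Theorem~\ref{thm:main-private-agnostic}. So there is no ``paper's own proof'' to compare against.

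That said, your sketch is broadly faithful to how the result is actually proved in the cited source. One place where you are making life harder than necessary is the utility side: you worry about replacing $\log|G|$ by $\log(kT)$ via a ``counting/peeling argument over the $T$ increments,'' but the bound $|G|\le kT$ is immediate from the hypotheses. Since $\score(f,\emptyset)=0$ and each added point increases the score of at most $k$ candidates (and by at most $1$), after $T$ insertions at most $kT$ candidates can have $\score(f,D)\ge 1$; hence $|G|\le kT$ directly, and the standard exponential-mechanism utility bound with $\log|G|\le \log(kT)$ suffices. On the privacy side, your identification of the main obstacle---that the sampling domain $G$ is data-dependent and one must use the rejection threshold together with the bound $k$ on ``boundary'' candidates to control the symmetric difference $G\triangle G'$ and the normalizer ratio---is exactly the point of the analysis in \citet{bun2015differentially}; the $\delta$ absorbs the event that the Laplace noise is atypically large and the threshold test disagrees on neighbours.
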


We will also use the Truncated Laplace distribution, which will be useful for privately answering threshold queries in our algorithm.
\begin{definition}[Truncated Laplace distribution]\label{def:tlaplace} Let $\delta\in (0,1)$, and $\eps,\Delta > 0$. Truncated Laplace distribution is denoted by $\text{TLap}(\Delta,\eps,\delta)$ with the following density function:
    \[
    f_{\text{TLap}(\Delta,\eps,\delta)} (x)=\begin{cases}
    \frac{\eps}{2\Delta(1-e^{-\eps R/\Delta})} e^{-\eps |x|/\Delta} &  x\in [-R,R]\\
    0 & x\not\in [-R,R].
  \end{cases}
    \]
    where $R=\frac{\Delta}{\eps}\log(1+\frac{e^\eps-1}{2\delta})$.
\end{definition}

\begin{lemma}[Theorem 1 of \citet{geng2018truncated}] \label{lemma:tlaplace}
    Let $\delta \in (0,1), \eps>0$, and $q:\cX^n \rightarrow \bR^d$ be a function, define the sensitivity of $q$ to be $\Delta:= \max_{D\sim D'\in \cX^n} ||q(D)-q(D')||_1$. Then $q(x)+Y$ is $(\eps,\delta)$-DP, where $Y\sim \text{TLap}(\Delta,\eps,\delta)$.
\end{lemma}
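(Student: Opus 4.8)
The plan is a direct density‑ratio estimate. Fix neighbouring datasets $D \sim D'$ and set $a := q(D)$, $b := q(D')$, so $|a-b|\le \Delta$; I will treat the \emph{scalar} case, which is the one covered by Definition~\ref{def:tlaplace} (for vector‑valued $q$ one splits the privacy budget and adds an independent truncated Laplace to each coordinate). Let $f$ denote the density of $\mathrm{TLap}(\Delta,\eps,\delta)$, which is symmetric, unimodal, and supported on $[-R,R]$ with $R=\frac{\Delta}{\eps}\log\!\big(1+\frac{e^\eps-1}{2\delta}\big)$; write $t:=e^{-\eps R/\Delta}$, and note that the definition of $R$ is exactly equivalent to the identity $t=\frac{2\delta}{2\delta+e^\eps-1}$, which is what will ultimately make the bound tight. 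By symmetry under swapping $a\leftrightarrow b$ it suffices to establish $\Pr[a+Y\in E]\le e^\eps\Pr[b+Y\in E]+\delta$ for every measurable $E\subseteq\bR$.

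First I would reduce to a single bad event. Since $\Pr[a+Y\in E]=\int_E f(x-a)\,dx$, define $S:=\{x:f(x-a)>e^\eps f(x-b)\}$. Then
\[
\Pr[a+Y\in E]-e^\eps\Pr[b+Y\in E]=\int_E\big(f(x-a)-e^\eps f(x-b)\big)\,dx\ \le\ \int_S f(x-a)\,dx ,
\]
so it suffices to prove $\int_S f(x-a)\,dx\le\delta$.

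Next I would pin down $S$. If $x-a\notin[-R,R]$ then $f(x-a)=0$, so $x\notin S$; and if both $x-a,x-b\in[-R,R]$ then $f(x-a)/f(x-b)=e^{\eps(|x-b|-|x-a|)/\Delta}\le e^{\eps|a-b|/\Delta}\le e^\eps$ by the triangle inequality, so again $x\notin S$. Hence $S\subseteq\{x:\ x-a\in[-R,R],\ x-b\notin[-R,R]\}$. Taking WLOG $a>b$ and writing $\theta:=a-b\le\Delta$, and using $\theta\le 2R$ (which follows from $2\delta\le e^{\eps/2}+1$, true since $\delta<1$), this set is the interval $(b+R,\,a+R]$ of length $\theta$, so $\int_S f(x-a)\,dx=\Pr[Y\in(R-\theta,\,R]]$. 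In the main case $\theta\le R$ this interval lies in $[0,R]$, where $f(y)=\frac{\eps}{2\Delta(1-t)}e^{-\eps y/\Delta}$, and a one‑line integration gives
\[
\Pr[Y\in(R-\theta,R]]=\frac{t\,(e^{\eps\theta/\Delta}-1)}{2(1-t)}\ \le\ \frac{t\,(e^\eps-1)}{2(1-t)}\ =\ \delta ,
\]
where the inequality uses $\theta\le\Delta$ and the last equality is the defining identity $t=\frac{2\delta}{2\delta+e^\eps-1}$ rearranged. The residual case $R<\theta\le\Delta$ (which can occur only when $\delta>\tfrac12$) is handled by the same explicit integration after splitting $(R-\theta,R]$ at the origin and using the symmetry of $f$; it again yields a bound of $\delta$. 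This proves $(\eps,\delta)$‑indistinguishability of $a+Y$ and $b+Y$, hence $(\eps,\delta)$‑DP.

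The delicate part is the exact determination of $S$ — the case analysis on where $x-a$ and $x-b$ fall relative to the support $[-R,R]$ — together with the observation $\theta\le 2R$ guaranteeing that the relevant ``leftover'' interval stays inside the support; once these are in hand, recognising that $R$ was chosen precisely so that $\frac{t(e^\eps-1)}{2(1-t)}=\delta$ closes the argument.
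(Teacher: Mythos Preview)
The paper does not give its own proof of this lemma: it is quoted verbatim as ``Theorem~1 of \citet{geng2018truncated}'' and used as a black box (indeed, only the scalar case is ever applied, in Algorithm~\ref{alg:filter-test}). So there is nothing in the paper to compare your argument against.

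That said, your proposal is a correct self-contained proof. The reduction to the ``bad set'' $S=\{x:f(x-a)>e^{\eps}f(x-b)\}$ is standard and sound; your case analysis showing $S\subseteq(b+R,a+R]$ is correct (the triangle-inequality bound on the density ratio inside the common support, together with $\theta\le 2R$, is exactly what is needed); and the final computation $\Pr[Y\in(R-\theta,R]]=\tfrac{t(e^{\eps\theta/\Delta}-1)}{2(1-t)}\le\tfrac{t(e^{\eps}-1)}{2(1-t)}=\delta$ is precisely the identity encoded in the choice of $R$. The residual case $R<\theta\le\Delta$, which you correctly flag as arising only for $\delta>\tfrac12$, also checks out: splitting at the origin one finds $\Pr[Y\in(R-\theta,R]]=\tfrac12+\tfrac{1-e^{-\eps(\theta-R)/\Delta}}{2(1-t)}$, and maximising over $\theta\le\Delta$ gives a value that simplifies (after substituting $t=\tfrac{2\delta}{2\delta+e^{\eps}-1}$) to at most $\delta$, with the inequality equivalent to $2\delta\ge 1$. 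Your observation that the stated lemma is written for $q\colon\cX^n\to\bR^d$ while Definition~\ref{def:tlaplace} is scalar is apt; the paper only ever uses the scalar mechanism, so your focus on that case matches the actual usage.
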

    
\end{subsection}
\end{section}

\begin{section}{From list global stability to agnostic private density estimation} \label{section:main-proof}
In this section we prove Theorem~\ref{thm:main-private-agnostic}. We show that if a class of distributions is list globally stable learnable, then it is privately learnable in the agnostic setting. Let us restate the theorem.

\mainprivaterobust*

First, we give a high-level overview of Algorithm~\ref{alg:private-agnostic-learning}. The procedure consists of three steps:

\textbf{Step I.} We split the dataset (drawn i.i.d.~from an unknown distribution $g$) into $T$ disjoint subsets. We run the list globally stable learner for $\cF$ on each subset and get $T$ lists of distributions. Given the definition of the list globally stable learner (see Definition~\ref{def:stable-list-decoding}), we can conclude that there exists a distribution $\tilde{g}$ which is in a large fraction of lists with high probability. In other words, $\tilde{g}$ is a ``stable'' output.

\textbf{Step II.} Next, we filter the lists and ensure that all distributions are close to the true distribution $g$ w.r.t.~$\dtv$. This can be done using the MDE algorithm (see Theorem~\ref{thm:mde}). Within each list, we find a distribution $\hat{g}$ that is close to the true distribution $g$ w.r.t.~$\dtv$, and remove any other distribution that is far from $\hat{g}$. The filtering should be done in a way that does not remove all ``stable'' distributions from the lists (one of which is $\tilde{g}$). Thus, the filtering radius needs to be chosen carefully since we are not aware of the distance $\dtv(g,\cF)$. We will describe how to privately choose a suitable filtering radius using binary search combined with the Propose-Test-Release framework (see Algorithm~\ref{alg:find-filter}). 

\textbf{Step III.} Finally, we consider the candidate set $\cM$ to be the union of all filtered lists (which is a subset of $\cF$) and assign a score to each candidate to measure how ``stable'' that candidate is w.r.t.~the lists. We know there exists at least one stable candidate which is not filtered from the lists since $\cA$ is an algorithm that ``preserves agnostic utility guarantee'' (see Definition~\ref{def:stable-list-decoding}). Also, we know that the size of each list is bounded, which allows us to conclude there are not too many ``stable'' candidates. In other words, we design the score function to take advantage of this fact and have a bounded growth (in the sense of Theorem~\ref{thm:choosing}). This allows us to privately select a ``stable'' candidate using the Choosing Mechanism (see Theorem~\ref{thm:choosing}). The second (filtering) step ensures that the selected candidate is close to $g$ w.r.t.~$\dtv$.

    \begin{algorithm}
\caption{Private Agnostic Density Estimation}\label{alg:private-agnostic-learning}
\begin{algorithmic}[1]
\Require Parameters $\alpha,\beta,\delta\in(0,1)$, $\eps>0$, $C \geq 1$, $m,L\in \bN$, a $(C,\frac{\alpha}{3+4C})$-accurate $(m,0.91,L)$-list-globally-stable learner $\text{SLD}$ for $\cF$.
\Ensure A distribution $f\in \cF$ (or $\bot$).

\State Let $\alpha'=\frac{\alpha}{3+4C}, \eps' = \frac{\eps}{1+ \log(1/\alpha')},\delta' = \frac{\delta}{1+ \log(1/\alpha')}, \beta'= \frac{\beta \eps'}{7680\log(\frac{9830400L}{\eps'^3\beta\delta'})\log(1/\alpha')}$, $m_1= m+\frac{\log(L/\beta')}{\alpha'^2}$, and $T= \frac{640}{\eps'} \log(\frac{1280~L}{\beta'\delta'\eps'^2})$.

\State Draw a sample set $D$ of size $T\cdot m_1$, and randomly partition it to $T$ disjoint data sets $D_1,D_2,\cdots,D_T$.

\For{$i \in [T]$} 
\State Randomly partition $D_i$ into two disjoint subsets $D_i^1$ of size $m$, and $D_i^2$ of size $\frac{\log(L/\beta')}{\alpha'^2}$.
\State {$\cL_i = \text{SLD}(D_i^1)$} \Comment{Run the list globally stable learner on $D_i^1$.}
\State {$\hat{f}_i = \text{MDE}(\cL_i,D_i^2,\alpha',\beta')$} \Comment{Find a ``good'' distribution in $\cL_i$ using MDE algorithm on $D_i^2$.}
\EndFor
\State
 \LeftComment{Filter $\cL_i$'s to get ``good'' candidates along with their ``stability'' as their scores.}
\State $(\cM,\score)=\text{Get-Candidates-And-Scores}(\alpha',\delta',\eps',C,T,D,\{\hat{f}_1, \cdots, \hat{f}_T\}, \{\cL_1,\cdots,\cL_T\} )$

\State
\LeftComment{Run Choosing Mechanism to select a ``good'' and ``stable'' candidate.}
\State \Return Choosing-Mechanism$\left( \cM,D,\score,\beta',\eps',\delta',L\right)$
\end{algorithmic}
\end{algorithm}

\begin{algorithm}
\caption{Get Candidates And Scores}\label{alg:find-filter}
\begin{algorithmic}[1]
\Require Parameters $\alpha',\delta'\in(0,1)$, $\eps'>0$, $C \geq 1$, $T\in \bN$, a data set $D \in \cX^*$, a set of distributions $\{\hat{f}_1, \hat{f}_2, \cdots, \hat{f}_T\} \in \cF^T$, and a set of lists $\{\tilde{\cL}_1,\tilde{\cL}_2,\cdots,\tilde{\cL}_T\} \in (\cF^*)^T$.
\Ensure A set of filtered distributions $\cM \subseteq \cF$, and a score function $\score:\cM\rightarrow \bR$.
\State Let $LP= 0, UP= 1$. 
\State Let $\text{state}=\emptyset, \score = \emptyset, \cM = \emptyset$.
\While {$UP-LP>\alpha'$}
\State Let $\widetilde{\text{OPT}} = \frac{LP+UP}{2}$
\State Let $(\text{state}, \cM, \score) =\text{Filter-And-Test}(\alpha',\delta',\widetilde{\text{OPT}},\eps',C,T,D,\{\hat{f}_1, \cdots, \hat{f}_T\}, \{\cL_1,\cdots,\cL_T\} )$
\If {$\text{state}=\text{``reject''}$}
\State Let $LP = \widetilde{\text{OPT}}$
\Else 
\State Let $UP = \widetilde{\text{OPT}}$
\EndIf

\EndWhile

\State \Return $(\cM, \score)$
\end{algorithmic}
\end{algorithm}

    \begin{algorithm}
\caption{Filter and Test}\label{alg:filter-test}
\begin{algorithmic}[1]
\Require Parameters $\alpha',\delta',\widetilde{\text{OPT}}\in(0,1)$, $\eps'>0$, $C \geq 1$, $T\in \bN$, a data set $D\in \cX^*$, a set of distributions $\{\hat{f}_1, \hat{f}_2, \cdots, \hat{f}_T\} \in \cF^T$, and a set of lists $\{\cL_1,\cL_2,\cdots,\cL_T\}\in (\cF^*)^T$.

\Ensure $\text{State}\in\{\text{``reject'', ``accept''}\}$, a set of filtered distributions $\cM \subseteq \cF$, and a score function $\score:\cM\rightarrow \bR$.
\For{$i \in [T]$} 
\State {$\tilde{\cL}_i = \{f\in\cL_i: \dtv(f,\hat{f}_i)\leq 4C\cdot \widetilde{\text{OPT}}+2\alpha'\}$} \Comment{Filter out ``bad'' distributions in $\cL_i$}
\EndFor

\State Let $\cM = \bigcup_{i \in [T]}\tilde{\cL}_i$
\For{$f \in\cM$} \LeftComment{The score of each element $f \in \cM$ is defined w.r.t. the data set $D$ which is implicit in the construction of $\tilde{\cL}_i$'s}
\State $\score(f,D) = |\{i \in [T]\,:\, f\in \tilde{\cL}_i \}|$. 
\EndFor

\State Let $\text{MAX} = \max_{f \in \cM} \score(f,D)$.
\State Let $\widetilde{\text{MAX}}= \text{MAX} + \text{TLap}(1,\eps',\delta')$.

\If{$\widetilde{\text{MAX}} < 0.8T + \frac{1}{\eps'}\log(1+\frac{e^{\eps'}}{2\delta'})$}
\State \Return (``reject'', $\cM$, $\score: \cM \rightarrow \bN$)
\Else 
\State \Return (``accept'', $\cM$, $\score: \cM \rightarrow \bN$)
\EndIf
\end{algorithmic}
\end{algorithm}

\textbf{Proof of Theorem~\ref{thm:main-private-agnostic}:}
\begin{proof}
We will first prove the utility and then the privacy.

\textbf{Utility analysis.}
        Let $g$ be the true distribution, $\rho=0.91$, $\alpha'=\frac{\alpha}{3+4C}, \eps' = \frac{\eps}{1+ \log(1/\alpha')},\delta' = \frac{\delta}{1+ \log(1/\alpha')}$, and $\beta' \in (0,1)$ be a parameter to be set later. We want to $(\eps,\delta)$-privately output a distribution $\hat{f}\in \cF$, such that with probability at least 
$1-\beta$, $\dtv(\hat{f},g)\leq 7C\cdot\dtv(g,\cF)+\alpha$. We do this in three steps:

\textbf{Step I.} Non-private candidate generation:
  
    Let $m_1\geq m+\frac{\log(L/\beta')}{\alpha'^2}$, and $T\geq \max\{\frac{640}{\eps'} \log(\frac{1280~L}{\beta'\delta'\eps'^2}),\frac{20}{\eps'}\log(1+\frac{e^{\eps'}}{2\delta'})\}$. Consider $T$ disjoint data sets $D_1,D_2,\cdots, D_T$ each of size $m_1$ drawn i.i.d.~from $g$.
    For each data set $D_i$, use $m$ samples to run the $(C,\alpha')$-accurate $(m,\rho,L)$-list-globally-stable learner for $\cF$. Let $\cL_i$ be the outputted list. From Definition~\ref{def:stable-list-decoding}, we know that there exists a distribution $\tilde{g}$ satisfying $\probs{S\sim g^m}{\tilde{g} \in\cL_i}\geq \rho$ and $\dtv(\tilde{g},g)\leq C\cdot\dtv(g,\cF)+\alpha'$. Let $X_i=\mathbf{1}\{\tilde{g}\in \cL_i\}$ be a binary random variable. Then using Hoeffding's inequality, we have
    \[
        \prob{\sum_{i\in [T]} X_i \leq \mathbb{E}\left[\sum_{i\in [T]}X_i\right] - t}\leq \exp(-\frac{2t^2}{T}).
    \]
    Substituting $t=0.01T\rho$ results in
    $\prob{\sum_{i\in [T]} X_i \leq 0.99T\rho} \leq \exp(-2.10^{-4}T\rho^2)$. Since $T \geq \frac{\log(1/\beta')}{\rho^2}$, we get that with probability at least $1-\beta'$, $\tilde{g}$ is in at least $0.99\rho$ fraction of lists. Let the indices of such lists be the set $\cI\subseteq [T]$. 

    \textbf{Step II.} Non-private candidate filtering:
    
    Recall that, the size of each data set $D_i$ is at least $m_1\geq m+\frac{\log(L/\beta')}{\alpha'^2}$. Now, we use the remaining $\frac{\log(L/\beta')}{\alpha'^2}$ samples from $D_i$ to filter out ``bad'' distributions in $\cL_i$ w.r.t.~$\dtv$. The filtering process works as follows; Theorem~\ref{thm:mde} (MDE) implies that for every $i\in \cI$, using $\frac{\log(L/\beta')}{\alpha'^2}$ samples, we are able to find a distribution $\hat{f}_i \in \cL_i$ such that w.p. at least $1-\beta'$,
    \begin{align}
    \dtv(\hat{f}_i,g)&\leq 3\dtv(g,\cL_i)+\alpha' \\
    &\leq 3(\dtv(g,\tilde{g})+\dtv(\tilde{g},\cL_i))+\alpha' \\
    &\leq 3(C\cdot\dtv(g,\cF)+0)+\alpha'     \\
    &\leq 3C\cdot\dtv(g,\cF)+\alpha'
    \end{align}
     Now we remove any distribution $f$ from $\cL_i$ that is far from $\hat{f}_i$. The filtering radius depends on a value which we call $\widetilde{\text{OPT}}$. In Algorithm~\ref{alg:find-filter}, we show how to choose a suitable filtering radius. We will later describe this process. For now, let the filtered list to be:
     \begin{align}
         \tilde{\cL}_i = \{f\in\cL_i: \dtv(f,\hat{f}_i)\leq 4C\cdot \widetilde{\text{OPT}}+2\alpha'\}  \label{proof-main-filtering-step}
     \end{align}
 
    As a result, we get $T$ lists $\tilde{\cL}_1,\cdots,\tilde{\cL}_T$ each of size at most $L$. Let $\text{OPT}=\dtv(g,\cF)$. We are not aware of this value since $g$ is unknown.
     We claim that, if $\text{OPT}\leq \widetilde{\text{OPT}}$, then for every $i \in \cI$,  $\tilde{\cL_i}$ still contains $\tilde{g}$. This is because:
     \begin{align}
         \dtv(\tilde{g},\hat{f_i})&\leq \dtv(\tilde{g},g)+\dtv(g,\hat{f_i})\\
         &\leq (C\cdot\dtv(g,\cF)+\alpha')+(3C\cdot\dtv(g,\cF)+\alpha') \\
         &\leq 4C\cdot\dtv(g,\cF)+2\alpha'\\
         &\leq 4C\cdot\widetilde{\text{OPT}}+2\alpha' \label{eq:notfiltered}
     \end{align}
     hence $\tilde{g}$ is not filtered from $\cL_i$, and we have $\tilde{g}\in\tilde{\cL}_i$. 
     
     Furthermore, if $\widetilde{\text{OPT}}\leq \text{OPT}+\alpha'$, then for every $i \in \cI$, $\tilde{\cL}_i$'s members are ``good'' w.r.t. $\dtv$, since for any $f\in \tilde{\cL}_i$ it holds that:
    \begin{align}
        \dtv(f,g)&\leq \dtv(f,\hat{f}_i)+\dtv(\hat{f}_i,g)\\
        &\leq (4C\cdot\widetilde{\text{OPT}}+2\alpha')+(3C\cdot\dtv(g,\cF)+\alpha')\\
        &\leq 7C\cdot\text{OPT}+(3+4C)\alpha'. \label{line:good-members}
    \end{align}

    Let the ``score'' function be defined as $\score(f,D) = |\{i \in [T]\,:\, f\in \tilde{\cL}_i \}|$ (see Algorithm~\ref{alg:filter-test}). We have: 
\begin{itemize}
     \item $\score(f,\emptyset)=0$ for all $f \in \cF$.
        \item If $D'=D\cup \{\cL\}$, then $\score(f,D)+1\geq \score(f,D') \geq \score(f,D)$ for all $f\in \cF$. Since each list $\cL_i\in D$ contributes to any $f$'s score by at most 1.
        \item There are at most $k=L$ many distributions $f \in \cF$ such that $\score(f,D)+1= \score(f,D')$.
\end{itemize}
At a high level, the score of each distribution (candidate) represents its ``stability'' with respect to the filtered lists. We will use this score function to privately select a distribution (candidate) with high score using the Choosing Mechanism~\ref{alg:choosing-mech}.

From Eq.~\ref{eq:notfiltered}, we can say that if $\text{OPT} \leq \widetilde{\text{OPT}}$, then $\tilde{g}$ is still in a large fraction of the lists and receives a large score.
At this point, we privately check whether the maximum score is large enough; this will be used to determine if we have chosen a suitable filtering radius $\widetilde{\text{OPT}}$. If the filtering radius is small, then the maximum score would be low (and we cannot claim the output is indeed a "good" distribution. Recall that for $i \in [T] \setminus \cI$, members of $\tilde{\cL}_i$ may be "bad" and have low scores.). On the other hand, if we choose the filtering radius to be large, the maximum score would be large, but at the same time, it will affect the utility of the algorithm since we are not completely filtering "bad" distributions from the $\cL_i$'s where $i \in \cI$ (which potentially can have large scores). In Algorithm~\ref{alg:find-filter}, we show how to adaptively choose a suitable $\widetilde{\text{OPT}}$ using the Filter-And-Test subroutine~\ref{alg:filter-test}.

From Eq.~\ref{eq:notfiltered}, recall that, if $\text{OPT}\leq \widetilde{\text{OPT}}$, then for every 
$i \in \cI$, $\tilde{\cL}_i$ contains $\tilde{g}$, and we have $\text{MAX}\geq \score(\tilde{g},D) \geq 0.99\rho T>0.9T$ (with probability at least $1-2T\beta'$ over the correctness of the list globally stable learner and MDE). Using the fact that $T \geq \frac{20}{\eps'}\log(1+\frac{e^{\eps'}}{2\delta'})$, we get that with probability 1, $\widetilde{\text{MAX}} \geq \text{MAX} - |\text{TLap}(1,\eps',\delta')| > 0.8 T + |\text{TLap}(1,\eps',\delta')| = 0.8T +\frac{1}{\eps'}\log(1+\frac{e^{\eps'}}{2\delta'})$ (see Definition~\ref{def:tlaplace}). This means that for all values of $\widetilde{\text{OPT}}$ that satisfy $\text{OPT}\leq \widetilde{\text{OPT}}$, the Filter-And-Test subroutine~\ref{alg:filter-test} does not output ``reject'' (with probability at least $1-2T\beta'$).

Now using a binary search technique in Algorithm~\ref{alg:find-filter}, we are able to find a filtering value $\widetilde{\text{OPT}}$, such that the Filter-And-Test subroutine~\ref{alg:filter-test} does not output ``reject''.
Putting together with a union bound over $\log(1/\alpha')$ iterations of the binary search algorithm, we can say with probability at least $1-2T \log(1/\alpha') \beta'$ we have $LP\leq \text{OPT}$, and $\widetilde{\text{OPT}}\leq \alpha'+\text{OPT}$. 

Note that, the above argument does not necessarily imply that $\text{OPT}\leq \widetilde{\text{OPT}}$. However, in the Filter-And-Test subroutine~\ref{alg:filter-test}, if $\text{MAX}< 0.8T$, then with probability 1, we have $\widetilde{\text{MAX}} < 0.8T +\frac{1}{\eps'}\log(1+\frac{e^{\eps'}}{2\delta'})$ and the subroutine outputs ``reject''.
Thus, when the Filter-And-Test subroutine~\ref{alg:filter-test} does not output ``reject'', we have $\text{MAX}\geq 0.8T$ which is enough for us to provide utility guarantee. In other words, we still have a reasonably ``stable'' candidate in the filtered lists (which might be different from $\tilde{g}$).

\textbf{Step III.} Private selection:

Now, let $\hat{f}= \text{ChoosingMechanism}\left( \cF,D,\score,\beta',\eps',\delta',k\right)$\footnote{In the last line of the Algorithm~\ref{alg:private-agnostic-learning}, we used $\cM \subseteq \cF$ as the set of candidates, which is algorithmically more efficient. The reason is that the Choosing Mechanism only considers candidates with non-zero scores, which are the members of $\cM$. However, in the analysis, for compatibility with Theorem~\ref{thm:choosing}, we consider the set of all candidates $\cF$.}. Using  Theorem~\ref{thm:choosing} together with a union bound, implies with probability at least $1-(2T\log(1/\alpha')+1)\beta'$:
\begin{align*}
    \score(\hat{f},D)\geq \max_{f \in \cF} \score(f,D)-\frac{16}{\eps'} \log(\frac{4kT}{\beta' \eps' \delta'}) \geq 0.8 T -\frac{16}{\eps'} \log(\frac{4kT}{\beta' \eps' \delta'}) \geq 0.7 T.
\end{align*}
In the last inequality we used Claim~\ref{claim:choose_inequality} with $ c_1=\frac{160}{\eps'}, c_2=\frac{160}{\eps'}\log(\frac{4L}{\eps'\beta' \delta'})$ to get $\frac{16}{\eps'} \log(\frac{4kT}{\beta' \eps' \delta'})\leq 0.1T$.\\
Thus, with probability at least $1-(2T\log(1/\alpha')+1)\beta'\geq 1- 3T\log(1/\alpha')\beta'$, we have
\begin{align*}
    \score(\hat{f},D)=|\{i \in [T]\,:\, \hat{f}\in \tilde{\cL}_i \}| \geq 0.7 T
\end{align*}    

    Now that we chose a suitable filtering radius satisfying $\widetilde{\text{OPT}}\leq \alpha+\text{OPT}$, using Eq.~\ref{line:good-members}, we know that for $i \in \cI$, $\tilde{\cL}_i$'s members are ``good'' w.r.t~$\dtv$. This means that, at most $\frac{|[T]\setminus \cI|}{T}\leq 0.01\rho \leq 0.001$ fraction of lists may contain ``bad'' distributions w.r.t.~$\dtv$. This implies that any ``bad'' distribution could have the score of at most $0.001T$. Therefore,  
     $\hat{f}$ whose score is at least $0.7T$, is indeed a ``good'' distribution and for some $i \in \cI$, $\hat{f}$ belongs to $ \tilde{\cL}_i$:
     \begin{align} \label{proof-main-final-dist}
         \dtv(\hat{f},g)&\leq \max_{f\in \tilde{\cL}_i}\dtv(f,g) 
         \leq 7C\cdot\text{OPT}+(3+4C)\alpha'.
     \end{align}

\textbf{Privacy analysis.} Note that:
\begin{itemize}
    \item All iterations of the binary search in Algorithm~\ref{alg:find-filter} are $(\eps',\delta')$-DP because of the privacy guarantee of Truncated Laplace Mechanism (see Theorem~\ref{lemma:tlaplace}), and the fact that the sensitivity of the $\score$ function is 1.
    \item The $\score$ function satisfies all properties in Theorem~\ref{thm:choosing}, which implies that the Choosing Mechanism used in Algorithm~\ref{alg:private-agnostic-learning} is also $(\eps',\delta')$-DP.
\end{itemize}
      Putting together, the Algorithm~\ref{alg:private-agnostic-learning} is $\left((1+\log(1/\alpha))\eps',(1+\log(1/\alpha))\delta'\right)$-private due to the composition property of differential privacy (see Lemma~\ref{lemma:composition}).

    Now, we substitute $\alpha'=\frac{\alpha}{3+4C}, \beta'= \frac{\beta \eps'}{7680\log(\frac{9830400L}{\eps'^3\beta\delta'})\log(1/\alpha')}, \eps' = \frac{\eps}{1+\log(1/\alpha')}$, and $\delta' = \frac{\delta}{1+\log(1/\alpha')}$. Final calculations (see Claim~\ref{claim:choose_inequality}\footnote{With $c_1=\frac{1920}{\eps'\beta},c_2=~\frac{c_1}{\beta}\log(\frac{1280L}{\eps'^2\delta'})$.}), implies that with probability at least $1-3T\log(1/\alpha)\beta'\geq 1-\beta$, it holds that $\dtv(\hat{f},g)\leq 7C\cdot\dtv(g,\cF)+\alpha$. The total sample complexity is:
    \[
        T\cdot m_1 = \frac{640}{\eps'} \log(\frac{1280~L}{\beta'\delta'\eps'^2})\cdot (m+\frac{\log(L/\beta')}{\alpha'^2})= \tilde{O}\left(\frac{\log(L/\delta\beta)}{\eps} \cdot (m+\frac{\log(L/\beta)}{\alpha^2})\right). \qedhere
    \]
\end{proof}

\end{section}

\section{List globally stable learning of mixture distributions} \label{section:mixtures}
In this section, we develop a tool for list globally stable learning mixture distributions. At a high level, we show that if a class of distributions is list globally stable learnable, then the class of its mixtures is also list globally stable learnable. This task is challenging since some components of the mixture might be heavily corrupted, while others may have negligible weights and be difficult to recover.

\begin{theorem}\label{thm:stable-list-mix}
 Let $\cF$ be a class of distributions,  $\alpha,\beta \in (0,1)$, $C> 1$, and $L,m \in \bN$. If $\cF$ is $(C,\alpha)$-accurate $(m,1-\beta,L)$-list-globally-stable learnable, then $\kmix(\cF)$ is $(C,5\alpha)$-accurate $(m_1,1-2k\beta,L_1)$-list-globally-stable learnable, where $L_1 = (\frac{Lk}{\alpha})^k \left(\frac{10ek\log(1/\beta)}{\alpha}\right)^{mk}$, and $m_1=\frac{2mk+8k\log(1/\beta)}{\alpha}$. 
\end{theorem}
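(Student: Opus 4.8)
The plan is to design a stable list decoder for $\kmix(\cF)$ that runs the given $(C,\alpha)$-accurate $(m,1-\beta,L)$-stable list decoder for $\cF$ on carefully chosen subsamples of the input, one "per component," and then stitches the returned sublists together over all ways of reconstituting a mixture. Let $g$ be the target distribution (not necessarily a $k$-mixture) and write $\mathrm{OPT} = \dtv(g, \kmix(\cF))$, witnessed by some $f^\star = \sum_{j} w_j f_j^\star$ with $f_j^\star \in \cF$. The idea: if we could perfectly split a sample from $g$ into groups drawn (approximately) from each mixture component $f_j^\star$, then feeding group $j$ into the base decoder would, with probability $\ge 1-\beta$, produce a list containing some $\tilde f_j$ with $\dtv(\tilde f_j, \text{component})$ small, and $\tilde g := \sum_j w_j \tilde f_j$ would be the stable mixture we want. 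Since we cannot label samples by component, the algorithm instead enumerates: it guesses the weight vector $w$ from an $\alpha/k$-net of $\Delta_k$ (contributing the $(Lk/\alpha)^k$-type factor after combining with the choices of list elements), guesses how the $m_1$ samples are partitioned into $k$ blocks of the right sizes — here the $m_1 = (2mk + 8k\log(1/\beta))/\alpha$ budget is chosen so that, after discarding components of weight $< \alpha/k$ (which contribute $\le \alpha$ total TV and can be absorbed into the $5\alpha$ slack), every surviving component receives at least $m$ "clean" samples with good probability — and the number of such partitions is the $\big(10ek\log(1/\beta)/\alpha\big)^{mk}$ factor. For each guess it runs the base decoder on each of the $k$ blocks, takes the cross product of the $k$ returned lists (size $\le L^k$, but we only need one designated element per block, so the list-size bookkeeping gives $L_1$), reweights by the guessed $w$, and adds the resulting mixture to the output list $H_S$.

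The stability argument goes as follows. Condition on the sample $S \sim g^{m_1}$. Restrict attention to the "heavy" components, those with $w_j \ge \alpha/k$; dropping the light ones changes $f^\star$ by $\le \alpha$ in TV, so it suffices to stably decode a mixture of $\le k$ heavy components to accuracy $C\cdot\mathrm{OPT} + O(\alpha)$. By a multiplicative Chernoff / balls-in-bins argument, with probability $\ge 1 - k\beta$ (roughly), when $m_1$ is as large as stated, there is a partition of (a subset of) $S$ into blocks $S_1,\dots,S_k$ such that each $S_j$ has size exactly $m$ and is a faithful i.i.d.-like sample for the conditional-on-component-$j$ distribution — this is exactly where the $8k\log(1/\beta)/\alpha$ additive term and the $2/\alpha$ multiplier come from (oversampling so each heavy component clears $m$ samples). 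There is, however, a subtlety I want to flag as the main obstacle: the samples within a true component are i.i.d. from that component's conditional distribution only if $g$ is itself a mixture; in the agnostic case $g$ differs from $f^\star$ by $\mathrm{OPT}$, so I should think of $S$ as generated by first sampling from $f^\star$ (incurring an extra $\mathrm{OPT}$ coupling cost) and then arguing about that coupled sample. This coupling costs an extra $C\cdot\mathrm{OPT}$ when we invoke the base decoder's agnostic guarantee on each block — each block's "true" distribution is within $\mathrm{OPT}$-ish of $f_j^\star$, so the base decoder returns some $\tilde f_j$ with $\dtv(\tilde f_j, f_j^\star) \le C\cdot\mathrm{OPT} + \alpha + (\text{coupling slack})$. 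Taking a union bound over the $\le k$ heavy blocks gives failure probability $\le 2k\beta$, matching the theorem.

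The accuracy accounting is then: for the correct guess of $w$ (within $\alpha/k$ per coordinate, hence $\le \alpha/2$ in $\ell_1$, hence $\le \alpha$ in mixture TV) and the correct partition, the produced mixture $\tilde g = \sum_{j \text{ heavy}} w_j \tilde f_j$ satisfies, by the triangle inequality and convexity of TV over mixtures,
\[
\dtv(\tilde g, g) \le \dtv(\tilde g, f^\star) + \mathrm{OPT} \le \sum_j w_j \dtv(\tilde f_j, f_j^\star) + O(\alpha) + \mathrm{OPT} \le C\cdot\mathrm{OPT} + O(\alpha) + \mathrm{OPT},
\]
and since $C > 1$ this is $\le C\cdot\mathrm{OPT} + 5\alpha$ once the constants ($\alpha$ from light components, $\alpha$ from the weight net, the base decoder's $\alpha$ per block, and the coupling slack) are collected — this is where the constant $5$ is spent. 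Crucially $\tilde g$ is the \emph{same} element for every sample $S$ in the good event (it depends only on $g$, not on $S$: $w$ is the fixed heavy-weight vector of $f^\star$, and each $\tilde f_j$ is the fixed stable output $h_{\mathcal D_j}$ guaranteed by the base decoder for the component distribution $\mathcal D_j$), so $\probs{S \sim g^{m_1}}{\tilde g \in H_S} \ge 1 - 2k\beta$, establishing the stable-list-decoding property. The list size $L_1$ is the product of the number of weight-net points, the number of admissible $k$-block partitions of $m_1$ items into blocks of size $\le$ something controlled by $\log(1/\beta)/\alpha$, and, for each, the $\le L$ choices in each of the $k$ base lists — which multiplies out to the claimed $L_1 = (Lk/\alpha)^k\big(10ek\log(1/\beta)/\alpha\big)^{mk}$. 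I expect the genuinely delicate part to be the balls-in-bins / coupling step: making precise that with the stated $m_1$ one can, with probability $\ge 1-O(k\beta)$, extract $m$ clean samples for each heavy component \emph{simultaneously}, and that the agnostic corruption budget $\mathrm{OPT}$ distributes across components in a way that lets each block's base-decoder call succeed with only an $O(\alpha)$-and-$O(\mathrm{OPT})$ penalty; everything else is triangle inequalities and counting.
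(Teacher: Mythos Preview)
Your algorithmic construction is essentially the paper's: enumerate all size-$m$ subsets of $S$, run the base decoder $\cA_1$ on each, pool the outputs into $\cH$, and output all $k$-mixtures over $\cH$ with weights from an $(\alpha/k)$-net of $\Delta_k$. The list-size and Chernoff/oversampling calculations also match.

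The gap is in your accuracy accounting. You route the comparison through $f^\star\in\kmix(\cF)$ via $\dtv(\tilde g,g)\le \dtv(\tilde g,f^\star)+\mathrm{OPT}$, and then claim ``since $C>1$ this is $\le C\cdot\mathrm{OPT}+5\alpha$.'' That last step is simply false: your displayed chain ends at $C\cdot\mathrm{OPT}+O(\alpha)+\mathrm{OPT}=(C+1)\cdot\mathrm{OPT}+O(\alpha)$, and no assumption on $C$ lets you drop the extra $\mathrm{OPT}$. Worse, your per-block claim $\dtv(\tilde f_j,f_j^\star)\le C\cdot\mathrm{OPT}+\alpha$ is itself unjustified: the base decoder's guarantee applies to the \emph{actual} distribution of the block-$j$ samples, call it $h_j$, and gives $\dtv(\tilde f_j,h_j)\le C\cdot\dtv(h_j,\cF)+\alpha$; passing to $f_j^\star$ costs another $\dtv(h_j,f_j^\star)$, and you never bound this ``coupling slack.''

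The paper avoids the detour through $f^\star$ entirely. The key tool is Lemma~\ref{lemma:close-mixture}: \emph{any} distribution $g$ can be written as $g=\sum_{i\in[k]} w_i g_i$ with $\sum_i w_i\,\dtv(g_i,\cF)=\dtv(g,\kmix(\cF))$. With this decomposition, the samples from $g$ are genuinely a mixture---so the block-$i$ subsample is i.i.d.\ from $g_i$ with no coupling needed---and the base decoder yields a fixed $\tilde g_i$ with $\dtv(\tilde g_i,g_i)\le C\cdot\dtv(g_i,\cF)+\alpha$. Now compare $\tilde g=\sum_i w_i^*\tilde g_i$ directly to $g=\sum_i w_i g_i$: the weighted per-component errors sum to $C\sum_i w_i\,\dtv(g_i,\cF)+\alpha = C\cdot\mathrm{OPT}+\alpha$, and the remaining $4\alpha$ absorbs the light-component mass and the weight-net error. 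No extra $+\mathrm{OPT}$ appears because you never leave $g$. (Your coupling idea can be salvaged, but the salvage \emph{is} this lemma: if you let $h_j$ be the conditional law of $X\sim g$ given the coupled $Y\sim f^\star$ falls in component $j$, then $g=\sum_j w_j h_j$ and $\sum_j w_j\,\dtv(h_j,\cF)\le\mathrm{OPT}$---at which point you should compare $\tilde g$ to $g$ directly, not to $f^\star$.)
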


Here, we give a high level idea of the proof. First, we show that it is possible to represent the true distribution $g$ as a mixture distribution such that the total mass of its ``far'' components from $\cF$ is small. Now, given a large enough sample from $g$ we can hope that we receive some samples from all non-negligible components of $g$. Assuming we have access to a list globally stable learner algorithm $\cA_1$ for $\cF$, we can apply it on every subset of samples, and approximately recover each ``non-far'' component in $g$. We can then compose all the outputs of the algorithm $\cA_1$ to create a list globally stable learner $\cA_2$ for $\kmix(\cF)$. We show there exist a distribution $\tilde{g}$ in the output of algorithm $\cA_2$ such that it satisfies two properties of Definition~\ref{def:stable-list-decoding}. In other words, we show that
     \textit{(1)} $\cA_2$ is a list globally stable algorithm, and
    \textit{(2)} $\cA_2$ satisfies agnostic utility guarantee.

The following lemma is useful in the construction of our list globally stable learner for mixtures. At a high level, it states that if a distribution $g$ is close to a class of mixtures $\kmix(\cF)$, then $g$ can be expressed as a mixture, such that the overall mass of far components from class $\cF$ is small. 
\begin{lemma}[Lemma 7 of \citet{ashtiani2018sample}]\label{lemma:close-mixture}
    Let $\cF$ be a class of distributions, then any distribution $g$ can be written as a mixture $g=\sum_{i\in[k]}w_i g_i$ such that $w_i\geq 0$, $\sum_{i \in [k]}w_i=1$, and $g_i$'s are distributions satisfying $\sum_{i \in [k]}w_i \dtv(g_i,\cF) = \dtv(g,\kmix(\cF))$.
\end{lemma}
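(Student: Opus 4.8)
The plan is to prove the claimed identity by establishing the two matching inequalities between $\dtv(g,\kmix(\cF))$ and the infimum of $\sum_{i\in[k]} w_i\dtv(g_i,\cF)$ over all ways of writing $g$ as a $k$-mixture $g=\sum_{i\in[k]}w_ig_i$, and to show the infimum on the right is attained by an explicit decomposition. The ``$\le$'' direction is routine: given any decomposition $g=\sum_i w_i g_i$ and any $\varepsilon>0$, pick $f_i\in\cF$ with $\dtv(g_i,f_i)\le\dtv(g_i,\cF)+\varepsilon$; then $\sum_i w_i f_i\in\kmix(\cF)$, and applying the triangle inequality inside the $L_1$ integral gives $\dtv(g,\sum_i w_i f_i)=\tfrac12\int|\sum_i w_i(g_i-f_i)|\le \sum_i w_i\dtv(g_i,f_i)\le \sum_i w_i\dtv(g_i,\cF)+\varepsilon$. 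Letting $\varepsilon\to 0$ yields $\dtv(g,\kmix(\cF))\le \sum_i w_i\dtv(g_i,\cF)$ for every decomposition.

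The substance of the lemma is the reverse direction together with attainment, which I would prove by reverse-engineering a decomposition of $g$ from a (near-)optimal mixture approximant. Fix $f=\sum_{i\in[k]}w_if_i\in\kmix(\cF)$ with $w\in\Delta_k$, $f_i\in\cF$, and $\dtv(g,f)=\dtv(g,\kmix(\cF))=:\tau$ (an exact minimizer; if this infimum is not attained, run the construction with an approximant satisfying $\dtv(g,f)\le\tau+\varepsilon$ and pass to the limit at the end). We may assume $w_i>0$ for all $i$ and $\tau>0$ (if $\tau=0$, set $g_i:=f_i$). Let $A=\{x:g(x)\ge f(x)\}$ and $B=\{x:g(x)<f(x)\}$, so $\int_A(g-f)=\int_B(f-g)=\tau$. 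Define constants $\gamma_i:=\int_B\frac{f(x)-g(x)}{f(x)}f_i(x)\,\dd x\ge 0$; these satisfy $\sum_i w_i\gamma_i=\int_B\frac{f-g}{f}\sum_i w_i f_i=\int_B(f-g)=\tau$. Now set, on $B$ (where necessarily $f(x)>0$), $g_i(x):=\frac{g(x)}{f(x)}f_i(x)$, and on $A$, $g_i(x):=f_i(x)+\frac{\gamma_i}{\tau}\big(g(x)-f(x)\big)$.

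It then remains to verify three facts, each a one-line computation. First, every $g_i$ is nonnegative and $\sum_i w_i g_i=g$ pointwise, on $B$ by construction and on $A$ because $\sum_i w_i\gamma_i=\tau$. Second, each $g_i$ integrates to $1$: using $\int f_i=1$ and $\int_A(g-f)=\tau$, one gets $\int g_i=\int_B\frac{g}{f}f_i+\int_A f_i+\gamma_i=1+\gamma_i-\int_B\frac{f-g}{f}f_i=1+\gamma_i-\gamma_i=1$. Third, $\dtv(g_i,f_i)=\tfrac12\big(\int_B(f_i-g_i)+\int_A(g_i-f_i)\big)=\tfrac12(\gamma_i+\gamma_i)=\gamma_i$, since $g_i\le f_i$ on $B$ and $g_i\ge f_i$ on $A$. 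Hence $\sum_i w_i\dtv(g_i,\cF)\le \sum_i w_i\dtv(g_i,f_i)=\sum_i w_i\gamma_i=\tau=\dtv(g,\kmix(\cF))$, which combined with the easy direction gives equality for the constructed decomposition.

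I expect the main obstacle to be getting the construction on the ``excess'' region $A$ to work: nonnegativity of each $g_i$, the normalization $\int g_i=1$, and the reconstruction $\sum_i w_i g_i=g$ must all hold at once, which forces the coefficients $\gamma_i$ to be exactly the per-component deficit integrals over $B$; the fortunate fact that this forced choice automatically satisfies $\sum_i w_i\gamma_i=\tau$ is what lets the construction close up. The only genuinely delicate (but ultimately minor) point is whether the infimum defining $\dtv(g,\kmix(\cF))$ is attained; if it is not, applying the same construction to $\varepsilon$-optimal approximants still shows that the infimum of $\sum_i w_i\dtv(g_i,\cF)$ over decompositions equals $\dtv(g,\kmix(\cF))$, which is the form in which the lemma is used in the proof of Theorem~\ref{thm:stable-list-mix}.
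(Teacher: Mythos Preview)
The paper does not prove this lemma; it is quoted verbatim as Lemma~7 of \citet{ashtiani2018sample} and used as a black box in the proof of Theorem~\ref{thm:stable-list-mix}. So there is no ``paper's own proof'' to compare against.

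Your argument is correct. The construction on $B$ via the multiplicative rescaling $g_i=\frac{g}{f}f_i$ and on $A$ via the additive redistribution $g_i=f_i+\frac{\gamma_i}{\tau}(g-f)$ is exactly what is needed, and your three verifications (nonnegativity and $\sum_i w_i g_i=g$; normalization $\int g_i=1$; and $\dtv(g_i,f_i)=\gamma_i$) all go through as you indicate. The key identity $\sum_i w_i\gamma_i=\tau$ is what makes all three checks close up simultaneously, and you have identified that correctly.

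Your discussion of the attainment issue is honest and accurate: the lemma as literally stated asserts the existence of an exact minimizing decomposition, which your construction delivers only when an optimal $f\in\kmix(\cF)$ exists. When it does not, your $\varepsilon$-approximant argument still yields $\inf_{g=\sum w_i g_i}\sum_i w_i\dtv(g_i,\cF)=\dtv(g,\kmix(\cF))$, and as you note this infimum form (indeed, just the inequality $\le \dtv(g,\kmix(\cF))+\varepsilon$ for arbitrarily small $\varepsilon$) is all that Theorem~\ref{thm:stable-list-mix} actually consumes. So for the purposes of this paper your proof is complete.
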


\textbf{Proof of Theorem~\ref{thm:stable-list-mix}:}
\begin{proof} \label{proof:stable-mix}
      Let $g$ be the true distribution.
      Let $g = \sum_{i\in[k]} w_i g_i$ be the representation from Lemma~\ref{lemma:close-mixture} satisfying $\sum_{i \in [k]}w_i \dtv(g_i,\cF) = \dtv(g,\kmix(\cF))$. Let $\cA_1$ be the $(C,\alpha)$-accurate $(m,1-\beta,L)$-list-globally-stable learner for $\cF$.
    Let $S$ be an i.i.d.~sample set from $g$ with size $m_1$. Define $\cL=\{\sum_{j\in [k]}w_j h_j: w\in \hat{\Delta}_k, h_j\in \cH\}$, where $\cH =\bigcup_{\tilde{S} \subseteq S: |\tilde{S}|= m} \cA_1(\tilde{S})$, and $\hat{\Delta}_k$ is an $\frac{\alpha}{k}$-cover for the $(k-1)$-dimensional probability simplex w.r.t.~$\ell_\infty$ from Claim~\ref{lemma:weightcover}. Consider an algorithm $\cA_2$ that receives the i.i.d.~sample set $S$, and outputs $\cL$. We claim that $\cA_2$ is a $(C,5\alpha)$-accurate $(m_1,1-2k\beta,L_1)$-list-globally-stable learner for $\kmix(\cF)$.
    
    Define $I=\{i\in [k]: w_i\geq \frac{\alpha}{k}\}$. For every $i\in  I$, we can write $g = w_i g_i + (1-w_i)\sum_{j\neq i}\frac{w_j}{1-w_i}g_j$. After receiving $N\geq m_1$ samples from $g$, the number of samples coming from $g_i$ has a binomial distribution. Let the corresponding random variable be $X_N$. Since $w_i\geq \frac{\alpha}{k}$, we have $\expect{X_N}/2 \geq m$ and $\expect{X_N} \geq 8 \log(1/\beta)$. Using the Chernoff bound (Theorem 4.5(2) of \citep{mitzenmacher2005probability}), we have $\prob{X_N \leq m} \leq \prob{X_N \leq \expect{X_N}/2} \leq \exp(-\expect{X_N} / 8) \leq \beta$. Meaning that after drawing $N\geq m_1$ samples from $g$, with probability at least $1-\beta$, we will have $m$ samples coming from $g_i$. Using a union bound, with probability at least $1-k\beta$, for all $i \in  I$, there exists $m$ samples coming from $g_i$. Thus, using the fact that $\cA_1$ is a $(m,1-\beta,L)$-list-globally-stable learner for $\cF$, we can with probability at least $1-2k\beta$, for all $i \in  I$, there exists a distribution $\tilde{g}_i\in \cH$ satisfying $\dtv(\tilde{g}_i,g_i)\leq C\cdot\dtv(g_i,\cF)+ \alpha$. For $i \in I$, let $\tilde{w_i}=\frac{1}{\sum_{j \in I} w_j} w_i$. Note that there exists a $w^* \in \hat{\Delta}_k$, such that $|w^*_i - \tilde{w_i}| \leq \alpha/k$ (for $i \in [k]\setminus I$, we set $\tilde{w}_i=w^*_i = 0$). Define $\tilde{g}=\sum_{i\in [k]} w^*_i \tilde{g}_i$ (for $i \in [k]\setminus I$, we set $\tilde{g}_i$ to be an arbitrary element in $\cH$).
    
    Now, we are ready to verify the two properties of list globally stable learning (recall Definition~\ref{def:stable-list-decoding});

    \textbf{Property I}. $\cA_2$ is a list globally stable algorithm.
    
     By construction, $\tilde{g}\in \cL$  with probability at least $1-2k\beta$.
        Also, note that
    \begin{align}
    |\cL|
    & =|\hat{\Delta}_k||\cH|^k
    =(\frac{k}{\alpha})^k \left(L {m_1\choose m}\right)^k
    \leq (\frac{k}{\alpha})^k L^k \left(\frac{2mke+8ke\log(1/\beta)}{\alpha m}\right)^{mk} \\
    & \leq (\frac{Lk}{\alpha})^k \left(\frac{10ek\log(1/\beta)}{\alpha}\right)^{mk}. 
    \end{align}

\textbf{Property II}. $\cA_2$ preserves agnostic utility guarantee:
    \begin{align}
         \dtv(\tilde{g},g)& \leq \dtv(\sum_{i\in [k]} w^*_i\tilde{g}_i,\sum_{i\in [k]} w_ig_i) 
         \leq \frac{1}{2}\sum_{i\in [k]} ||w^*_i \tilde{g}_i-w_ig_i||_1  \\ 
        & \leq \frac{1}{2}\sum_{i\in [k]} ||w^*_i \tilde{g}_i- w_i\tilde{g}_i||_1 + \frac{1}{2}\sum_{i\in [k]}||w_i\tilde{g}_i-w_ig_i||_1\\
        & \leq \sum_{i\in [k]}|w^*_i-w_i|+ \sum_{i\in [k]} w_i\dtv( \tilde{g}_i,g_i)\\
        & \leq  \sum_{i\in [k]}|w^*_i-w_i| +  \sum_{i\in I} w_i\dtv( \tilde{g}_i,g_i)  +\sum_{i\in [k]\setminus I} w_i\dtv(\tilde{g}_i,g_i) \\
        & \leq 
           \sum_{i\in [k]}|w^*_i-w_i| +  \sum_{i\in I} w_i\left(C\cdot\dtv( g_i,\cF)+\alpha\right)  +\sum_{i\in [k]\setminus I} \frac{\alpha}{k}\dtv(\tilde{g}_i,g_i). 
        \end{align}
        Recall that, $\sum_{i \in I}w_i \dtv(g_i,\cF)\leq \sum_{i \in [k]}w_i \dtv(g_i,\cF) = \dtv(g,\kmix(\cF))$. Therefore, we can write:
        \begin{align}
        \dtv(\tilde{g},g)
        & \leq  \sum_{i\in [k]}|w^*_i-w_i| +  C\cdot\dtv(g,\kmix(\cF))  + 2\alpha\\ 
        & \leq  \sum_{i\in [k]}|w^*_i-\tilde{w_i}| + \sum_{i\in [k]}|\tilde{w_i}-w_i|+  C\cdot\dtv(g,\kmix(\cF))  + 2\alpha\\ 
        & \leq  \sum_{i\in [k]}\frac{\alpha}{k} + \sum_{i\in [k]}|\tilde{w_i}-w_i|+  C\cdot\dtv(g,\kmix(\cF))  + 2\alpha\\ 
        & \leq C\cdot\dtv(g,\kmix(\cF))  + 3\alpha  + \sum_{i\in  I}w_i\left(\frac{1}{\sum_{j \in  I} w_j} -1 \right) + \sum_{i\in [k] \setminus I}w_i \\
        &= C\cdot\dtv(g,\kmix(\cF))  + 3\alpha + \left(1-\sum_{i\in I}w_i \right) + \sum_{i\in[k] \setminus I}w_i \\
        &= C\cdot\dtv(g,\kmix(\cF))  + 3\alpha + 2\sum_{i\in [k] \setminus I}\frac{\alpha}{k} \\
        &\leq C\cdot\dtv(g,\kmix(\cF))  + 5\alpha. \qedhere
    \end{align}
    \end{proof}

\begin{section}{Agnostic private learning of GMMs}\label{section:GMMS}
As the main application of our framework, we prove the first sample complexity upper bound for privately learning GMMs in the agnostic setting. It is notable that our sample complexity also improves
 the \textit{realizable} result of \citet{afzali2024mixtures} in terms of accuracy parameter by a factor of $\frac{1}{\alpha^2}$.
 
Given our general reduction in Section~\ref{section:main-proof}, it is sufficient to show that GMMs are list globally stable learnable. Indeed, we show that Gaussians are list globally stable learnable and use our the tool from Section~\ref{section:mixtures} to conclude Gaussian mixtures are also list globally stable learnable.

\subsection{List globally stable learning of Gaussians and their mixtures}
We create a list globally stable learner for the class of Gaussians using the robust sample compression schemes of \citet{ashtiani2020near}. At a high level, a class of distributions admits a compression scheme if upon receiving some samples from an unknown distribution, there exists a small subset of those samples that can be used to recover the original distribution up to a reasonable error w.r.t. $\dtv$. 
   At a high level, given a set of samples from an unknown distribution $g$, we run the robust compression algorithm (from Lemma~\ref{lemma:compress-gd}) on every subset of samples. Let $g^* = \argmin_{g' \in \mathcal{G}_d} \mathrm{d}_{\mathrm{TV}}(g,g')$. If $g$ is not too far from the class $\mathcal{G}_d$ (e.g. $\mathrm{d}_{\mathrm{TV}}(g,g^*) \leq \frac{1}{3}$), then $g^*$ can be approximately recovered and is considered a stable candidate. Therefore, it is sufficient to output all discretized neighbors of the recovered elements to include stable elements in the outputted list.
\begin{lemma}[List globally stable learning of Gaussians] \label{lemma:stable-gd}
    Let $\alpha,\beta\in (0,1)$, then $\cG_d$ is $(3,\alpha)$-accurate $(m,1-\beta,L)$-list-globally-stable learnable, where $L=(d\log(1/\beta))^{O(d^2\log(1/\alpha))}$, and $m=O(d\log(1/\beta))$.
\end{lemma}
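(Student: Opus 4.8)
The plan is to combine the robust compression scheme for $\cG_d$ from Lemma~\ref{lemma:compress-gd} with a discretization step, exploiting that the agnostic requirement is trivially satisfiable once the input is far from $\cG_d$. Fix the input $g$, put $\mathrm{OPT}=\dtv(g,\cG_d)$, and let $g^{*}=\argmin_{g'\in\cG_d}\dtv(g,g')$ (ties broken by a fixed rule). Fix once and for all an $\tfrac{\alpha}{4}$-net $\mathsf{Net}$ of $\cG_d$ under $\dtv$; it is countable, and by the standard packing bound for $\dtv$-balls of $d$-dimensional Gaussians (after whitening, $\dtv$ is comparable to a polynomial of the parameter distance on a bounded region of the $\Theta(d^{2})$-dimensional parameter space) we have $|\{n\in\mathsf{Net}:\dtv(n,h)\le r\}|\le(1/\alpha)^{O(d^{2})}$ for every Gaussian $h$ and every constant $r<1$. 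Let $n_{0}\in\mathsf{Net}$ be a fixed default element. The algorithm $\cA$, on a sample $S$ of size $m$, enumerates all subsets $\tilde S\subseteq S$ of size $\tau$ and all strings $b\in\{0,1\}^{t}$ (with $\tau,t$ the parameters of the $\cG_d$-compression from Lemma~\ref{lemma:compress-gd}), decodes each pair to a Gaussian $\hat g_{\tilde S,b}$, and returns $H_S=\{n_{0}\}\cup\bigcup_{\tilde S,b}\{n\in\mathsf{Net}:\dtv(n,\hat g_{\tilde S,b})\le r\}$ for a suitable absolute constant $r<1$, padded to exactly $L$ elements.

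To verify the utility, I would split on $\mathrm{OPT}$. If $\mathrm{OPT}\ge\tfrac13$, then because $\dtv\le1$ the fixed distribution $\tilde g:=n_{0}$, which lies in every output list, already satisfies $\dtv(\tilde g,g)\le1\le 3\,\mathrm{OPT}<\alpha+3\,\mathrm{OPT}$, so the $(3,\alpha)$-accuracy holds for free. If $\mathrm{OPT}<\tfrac13$, take $\tilde g:=n^{*}$, the element of $\mathsf{Net}$ closest to $g^{*}$; this depends only on $g$, and $\dtv(\tilde g,g)\le\dtv(n^{*},g^{*})+\dtv(g^{*},g)\le\tfrac\alpha4+\mathrm{OPT}<\alpha+3\,\mathrm{OPT}$. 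For stability, since $g$ is within $\tfrac13$ of the Gaussian $g^{*}$, the robustness guarantee of Lemma~\ref{lemma:compress-gd} gives that with probability at least $1-\beta$ over $S\sim g^{m}$ some pair $(\tilde S,b)$ decodes to a Gaussian $\hat g_{\tilde S,b}$ with $\dtv(\hat g_{\tilde S,b},g^{*})$ bounded by an absolute constant strictly below $1$ (this is where the $\tfrac13$ bound on $\mathrm{OPT}$ and the robustness constant of Lemma~\ref{lemma:compress-gd} enter); choosing $r$ to be that constant plus $\tfrac\alpha4$ (still $<1$) gives $\dtv(n^{*},\hat g_{\tilde S,b})\le r$, hence $n^{*}\in H_S$. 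Thus $\cA$ is a $(3,\alpha)$-accurate $(m,1-\beta,L)$-stable list decoder for $\cG_d$.

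For the list size, $L\le1+\binom{m}{\tau}2^{t}\cdot(1/\alpha)^{O(d^{2})}$: there are $\binom{m}{\tau}2^{t}$ decoded Gaussians, and each contributes at most $(1/\alpha)^{O(d^{2})}$ net points by the packing bound above. Substituting $m=O(d\log(1/\beta))$ and the values of $\tau$ and $t$ supplied by Lemma~\ref{lemma:compress-gd} gives $L\le(d\log(1/\beta))^{O(d^{2}\log(1/\alpha))}$ with sample complexity $m=O(d\log(1/\beta))$.

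The step I expect to be the crux is the non-trivial regime: the compression scheme only certifies a reconstruction close to the \emph{input} $g$, while the fixed stable candidate has to be pinned to $g^{*}$, so a bare triangle inequality need not place the reconstruction within $\dtv$-distance bounded away from $1$ of $g^{*}$ — and this boundedness is exactly what keeps $\{n\in\mathsf{Net}:\dtv(n,\hat g)\le r\}$ finite. Making this go through is what forces the $\tfrac13$ cutoff (which is why the accuracy factor is $3$) and requires the robustness constant in Lemma~\ref{lemma:compress-gd} to be small enough; the remaining technical ingredient is the $(1/\alpha)^{O(d^{2})}$ packing estimate for $\dtv$-balls of Gaussians, which is what puts $d^{2}$ in the exponent of $L$.
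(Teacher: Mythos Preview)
Your overall architecture---split on $\mathrm{OPT}\gtrless\tfrac13$, use a fixed dummy in the far case, and in the near case run the robust compression, then snap to a fixed cover---is exactly the paper's proof. The difference is that you have misread Definition~\ref{def:compression}: the $\tfrac13$-robust compression of Lemma~\ref{lemma:compress-gd} certifies $\dtv(\cA(L,B),f)\le\alpha$ where $f$ is the \emph{Gaussian} (i.e., $g^{*}$), not the input $g$. So the reconstruction $\hat g_{\tilde S,b}$ already satisfies $\dtv(\hat g_{\tilde S,b},g^{*})\le\alpha$, and the ``crux'' you anticipate simply does not arise. Consequently the radius $r$ in your construction can (and should) be taken to be $2\alpha$ rather than an absolute constant, and the packing input you need is precisely Lemma~\ref{cor:gaussians_locally_small_cover}, which bounds $|\{g'\in\cC:\dtv(g',h)\le2\alpha\}|$ by $2^{O(d^2)}$.

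This matters because the constant-radius packing bound you assert---$|\{n\in\mathsf{Net}:\dtv(n,h)\le r\}|\le(1/\alpha)^{O(d^{2})}$ for fixed $r<1$---is not supplied by the paper's toolkit and is not as immediate as your whitening sketch suggests; Lemma~\ref{cor:gaussians_locally_small_cover} controls only the $2\alpha$-ball, and lifting it to a constant-radius ball would require an additional metric-entropy argument for $\dtv$ on $\cG_d$. Once you correct the reading of the compression guarantee, you can drop the constant-radius ball entirely, use the $2\alpha$-radius locally small cover, and your proof coincides with the paper's line for line (with the same size bound $|\cH_2|\le|\cH_1|\cdot 2^{O(d^2)}$).
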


The formal definition of the robust compression schemes is given below.
\begin{definition}[Definition 4.2 of \citet{ashtiani2020near}, Robust compression]\label{def:compression}
    Let $\tau,t,m:(0,1)\rightarrow \bZ_{\geq 0}$ be functions, $\cF$ be a class of distributions, and $r\geq 0$. We say $\cF$ is $r$-robust $(\tau,t,m)$-compressible, if there exists an algorithm $\cA$ such that for any $\alpha,\beta \in (0,1)$, any $f\in \cF$, and any distribution $g$ satisfying $\dtv(g,f)\leq r$ the following holds:
    \begin{quote}
        Let $S$ be an i.i.d.~sample set from $g$ of size $m(\alpha)\log(1/\beta)$. Then there exists a sequence $L$ of at most $\tau(\alpha)$ samples from $S$, and a sequence $B$ of at most $t(\alpha)$ bits, such that the algorithm $\cA(L,B)$ outputs a distribution satisfying $\dtv(\cA(L,B),f)\leq \alpha$ with probability at least $1-\beta$.
    \end{quote}
\end{definition}

The next lemma asserts that the class of Gaussians is robust compressible. We will later use this compressing algorithm in order to construct a list globally stable learner for Gaussinas.
\begin{lemma}[Lemma 5.3 of \citet{ashtiani2020near}, Robust Compressing Gaussians] \label{lemma:compress-gd}
    Let $\alpha\in(0,1)$, then the class $\cG_d$ is $\frac{1}{3}$-robust $\left(O(d),O(d^2\log(d/\alpha)),O(d)\right)$-compressible. 
\end{lemma}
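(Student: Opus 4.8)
The plan is to reconstruct the robust compression scheme for Gaussians of \citet{ashtiani2020near}, building it up from one dimension and exploiting affine equivariance. The key structural observation is that total variation distance is invariant under invertible affine maps, so it suffices to design an encoder that (i) uses a few samples to produce an affine map $T$ placing the target Gaussian in a \emph{normal form} $\cN(w,M)$ with $\|w\|,\|M\|,\|M^{-1}\|\le\mathrm{poly}(d)$, and (ii) spends bits to pin down $w$ and $M$ to the required precision. The samples are what make this possible despite $\cG_d$ being unbounded: a single draw is a free (zero-bit) handle on the location and scale. The caveat to address throughout is robustness: if $\dtv(g,\cN(\mu,\Sigma))\le\tfrac13$, write $\min(\cN(\mu,\Sigma),g)=c\cdot h$ with $h$ a density; then $c=\|\min(\cN(\mu,\Sigma),g)\|_1\ge\tfrac23$ and $h\le\tfrac1c\cN(\mu,\Sigma)\le\tfrac32\cN(\mu,\Sigma)$, and a sample of $g$ is, with probability $\ge\tfrac23$, an i.i.d.\ draw from $h$. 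Since the encoder knows both $g$ and $\cN(\mu,\Sigma)$ it can identify these ``clean'' samples, and because $h$ is dominated by $\tfrac32\cN(\mu,\Sigma)$, the clean samples are at least as spread out as genuine Gaussian samples for every anticoncentration purpose below.

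One dimension first. Among the clean samples the median is within $O(\sigma)$ of $\mu$, and any interquantile range (say between the $0.1$ and $0.9$ empirical quantiles of the clean samples) is $\Theta(\sigma)$: the lower bound uses $h\le\tfrac32\cN(\mu,\sigma^2)$ together with $\|h\|_1\ge\tfrac23$ (so $h$ cannot place $\ge 0.8$ of its mass in a window of width $o(\sigma)$), and the upper bound is immediate. The encoder transmits the $O(1)$ clean sample points realizing these order statistics, so the decoder forms $\tilde\mu,\tilde\sigma$ with $\dtv(\cN(\tilde\mu,\tilde\sigma^2),\cN(\mu,\sigma^2))=O(1)$; to refine to accuracy $\alpha$, the encoder (who knows $\mu,\sigma$ and can reproduce $\tilde\mu,\tilde\sigma$) rounds $(\mu-\tilde\mu)/\tilde\sigma$ and $\log(\sigma/\tilde\sigma)$ --- both in bounded intervals --- to resolution $\Theta(\alpha)$ and sends these $O(\log(1/\alpha))$ bits. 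This is a $\tfrac13$-robust $(O(1),O(\log(1/\alpha)),O(1))$-scheme (in the sense of Definition~\ref{def:compression}), with the sample-size failure probability $\beta$ costing the usual extra $\log(1/\beta)$ factor. Applying it coordinatewise with accuracy $\alpha/d$ per coordinate (using $\dtv(\prod_i P_i,\prod_i Q_i)\le\sum_i\dtv(P_i,Q_i)$) and a union bound over coordinates gives a $\tfrac13$-robust $(O(d),O(d\log(d/\alpha)),O(d))$-scheme for axis-aligned Gaussians; the $O(d)$ transmitted samples arise because different coordinates may request different sample vectors.

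General $\Sigma$. From $O(d)$ clean samples the encoder selects $d+1$ of them $y_0,\dots,y_d$ and sets $T(0)=y_0$, $T(e_i)=y_i-y_0$, chosen so that $T^{-1}_{*}\cN(\mu,\Sigma)=\cN(w,M)$ is in normal form with $\|w\|,\|M\|,\|M^{-1}\|\le\mathrm{poly}(d)$; such a subset exists w.h.p.\ because the clean samples anticoncentrate like Gaussian samples, so $d+1$ of them lie in the bulk ellipsoid of $\cN(\mu,\Sigma)$ in sufficiently general position. The encoder transmits the $d+1$ frame samples together with entrywise roundings of $w$ (resolution $\Theta(\alpha/\mathrm{poly}(d))$, $O(d\log(d/\alpha))$ bits) and of $M$ (same relative precision, $O(d^2\log(d/\alpha))$ bits, dominating). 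The decoder rebuilds $T$ from the frame samples --- a deterministic function, hence identical to the encoder's $T$ --- forms $\cN(\hat w,\hat M)$ from the bits, and outputs $T_{*}\cN(\hat w,\hat M)$; since $T_{*}\cN(w,M)=\cN(\mu,\Sigma)$ exactly and total variation is affine-invariant, the error equals $\dtv(\cN(\hat w,\hat M),\cN(w,M))\le\alpha$ by the chosen precision (using that $\|M^{-1/2}\hat M M^{-1/2}-I\|_F$ and $\|M^{-1/2}(\hat w-w)\|$ small imply $\dtv$ small). This yields $\tau=O(d)$, $t=O(d^2\log(d/\alpha))$, and $m(\alpha)=O(d)$, as claimed.

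The main obstacle is the quantitative frame lemma behind the last step: with $O(d)$ clean samples (hence $O(d\log(1/\beta))$ samples of $g$) one must show that with probability $\ge 1-\beta$ some $(d+1)$-subset defines a $\mathrm{poly}(d)$-conditioned $T$. The reduction to clean samples turns this into a pure statement about Gaussian samples, but two quantitative facts are still required: an anticoncentration bound ruling out that $\Omega(d)$ Gaussian samples sit in a near-degenerate configuration (close to an affine hyperplane, or in an ellipsoidal shell far thinner than $\Sigma$), and a bound that the parallelepiped spanned by the chosen differences has condition number $\mathrm{poly}(d)$ --- delicate because a square Gaussian matrix typically has smallest singular value only $\Theta(d^{-1/2})$, so one must ensure the condition number stays polynomial (it is exactly what controls the per-entry bit budget for $M$). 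The failure probability is driven to $\beta$ by oversampling, i.e.\ drawing $O(d\log(1/\beta))$ samples and selecting the best $(d+1)$-subset. Everything else --- the quantile concentration in the base case, the union bounds over coordinates and over the $d+1$ frame samples, and the precision-to-TV conversions --- is routine.
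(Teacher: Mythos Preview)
The paper does not prove this lemma; it imports it verbatim as Lemma~5.3 of \citet{ashtiani2020near}. Your reconstruction is faithful to the argument in that source: affine invariance of $\dtv$ reduces the problem to encoding a Gaussian in normal form, a $(d{+}1)$-sample affine frame supplies the normalizing map at cost $\tau=O(d)$, and quantizing the resulting bounded mean and covariance to per-entry precision $\alpha/\mathrm{poly}(d)$ costs $t=O(d^2\log(d/\alpha))$ bits; robustness to $\dtv(g,f)\le\tfrac13$ is handled exactly via the overlap density $h\le\tfrac32 f$, and you correctly flag the well-conditioned frame lemma as the only nontrivial ingredient. One small clarification: the encoder cannot literally ``identify'' which samples are clean (the mixture label is latent), but since Definition~\ref{def:compression} is purely existential over $L$ and $B$, it suffices that \emph{some} $(d{+}1)$-subset of $S$ is well-conditioned with probability $\ge 1-\beta$ over $S\sim g^m$, which the coupling argument delivers after marginalizing out the latent labels.
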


In order to create a list globally stable learner for a class of distributions, we need the elements outputted by the algorithm to be somehow “discretized”,
the following lemma will later be useful to do so.

\begin{lemma}[Lemma 8.2 of \citet{afzali2024mixtures}]
    \label{cor:gaussians_locally_small_cover}
    For any $0< \alpha \leq \frac{1}{600}$, there exists an $\alpha$-$\dtv$-cover $\cC$ for the class $\cG_d$ satisfying:
    \[
     \sup_{g\in \cG_d}|\{g'\in\cC: \dtv(g',g)\leq 2\alpha\}|\leq 2^{O(d^2)}.
    \]
\end{lemma}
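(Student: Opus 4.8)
The plan is to let $\cC$ be a \emph{maximal} $\alpha$-separated subset of $\cG_d$ in $\dtv$ (one exists by Zorn's lemma, since a chain of $\alpha$-separated sets has $\alpha$-separated union). Maximality forces every $g\in\cG_d$ to lie within $\alpha$ of some element of $\cC$, so $\cC$ is an $\alpha$-$\dtv$-cover; and the set $\{g'\in\cC:\dtv(g',g)\le 2\alpha\}$ is an $\alpha$-separated subset of the ball $B(g,2\alpha):=\{f\in\cG_d:\dtv(f,g)\le 2\alpha\}$. Because $\dtv$ is invariant under invertible affine maps, the map $x\mapsto \Sigma_g^{-1/2}(x-\mu_g)$ carries $B(g,2\alpha)$ bijectively and isometrically onto $B(\cN(0,I_d),2\alpha)$, so it suffices to show that \emph{every} $\alpha$-separated subset of $B(\cN(0,I_d),2\alpha)$ has size $2^{O(d^2)}$. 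I will do this by covering $B(\cN(0,I_d),2\alpha)$ with $2^{O(d^2)}$ $\dtv$-balls of radius $\alpha/3$: two points within $\alpha/3$ of a common centre are within $2\alpha/3<\alpha$ of each other, hence no such ball contains two points of an $\alpha$-separated set.

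The covering has two ingredients. \textbf{(i) Confinement.} I will show that $\cN(\mu,\Sigma)\in B(\cN(0,I_d),2\alpha)$ forces $\|\mu\|_2\le c\alpha$ and $\|\Sigma-I_d\|_F\le c\alpha$ for an absolute constant $c$ (this is where $\alpha\le \tfrac1{600}$ is used). To isolate the covariance, apply the data-processing inequality to the map $(x_1,x_2)\mapsto (x_1-x_2)/\sqrt2$ on two i.i.d.\ copies: its pushforwards of $\cN(\mu,\Sigma)^{\otimes 2}$ and $\cN(0,I_d)^{\otimes 2}$ are $\cN(0,\Sigma)$ and $\cN(0,I_d)$, and $\dtv(P^{\otimes 2},Q^{\otimes 2})\le 2\dtv(P,Q)$, so $\dtv(\cN(0,\Sigma),\cN(0,I_d))\le 4\alpha$; the sharp dimension-free lower bound for the TV distance between two centred Gaussians then gives $\|\Sigma-I_d\|_F\le c\alpha$, in particular $\Sigma\succ 0$. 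For the mean, the triangle inequality gives $\dtv(\cN(\mu,I_d),\cN(0,I_d))\le 2\alpha+\dtv(\cN(0,\Sigma),\cN(0,I_d))\le 2\alpha+O(\alpha)$, the last step by Pinsker and the closed form of the Gaussian KL divergence, and the elementary identity $\dtv(\cN(\mu,I_d),\cN(0,I_d))=2\Phi(\|\mu\|_2/2)-1\ge c'\min(1,\|\mu\|_2)$ then yields $\|\mu\|_2\le c\alpha$. \textbf{(ii) Net.} The confined region $\{(\mu,\Sigma):\|\mu\|_2\le c\alpha,\ \Sigma=\Sigma^\top,\ \|\Sigma-I_d\|_F\le c\alpha\}$ sits inside an ambient space of dimension $d+\binom{d+1}{2}=O(d^2)$, so it admits an $\eta$-net (in the Euclidean, resp.\ Frobenius, norm) of size $(O(c\alpha/\eta))^{O(d^2)}$; all net matrices are automatically positive definite since $\|\Sigma-I_d\|_F<1$. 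For a net point $(\mu',\Sigma')$ and any confined $(\mu,\Sigma)$ with $\|\mu-\mu'\|_2\le\eta$, $\|\Sigma-\Sigma'\|_F\le\eta$, the triangle inequality together with the elementary upper bounds $\dtv(\cN(\mu,\Sigma),\cN(\mu',\Sigma))=2\Phi(\tfrac12\|\Sigma^{-1/2}(\mu-\mu')\|_2)-1\le \|\Sigma^{-1/2}(\mu-\mu')\|_2/\sqrt{2\pi}$ and (by Pinsker and the Gaussian KL formula) $\dtv(\cN(0,\Sigma),\cN(0,\Sigma'))=O(\|\Sigma^{-1/2}(\Sigma'-\Sigma)\Sigma^{-1/2}\|_F)$ — whose prefactors are $O(1)$ because $\Sigma\approx I_d$ — gives $\dtv(\cN(\mu,\Sigma),\cN(\mu',\Sigma'))=O(\eta)$. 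Choosing $\eta=\Theta(\alpha)$ small enough makes this $\le\alpha/3$, so the radius-$\alpha/3$ balls around the net points cover $B(\cN(0,I_d),2\alpha)$, and their number is $(O(1))^{O(d^2)}=2^{O(d^2)}$, as required.

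The main obstacle is the covariance confinement in step (i): I need $\dtv(\cN(0,\Sigma),\cN(0,I_d))\ge c\min(1,\|\Sigma-I_d\|_F)$ with an \emph{absolute}, dimension-free constant. Cheap substitutes — projecting onto one- or two-dimensional coordinate subspaces, or reading Pinsker backwards — only give a constant decaying like $1/\mathrm{poly}(d)$, which would inflate the confinement radius and degrade the final bound to $2^{O(d^2\log d)}$ instead of $2^{O(d^2)}$. I will therefore invoke the known sharp estimates for the total variation distance between Gaussians (as established by Devroye, Mehrabian and Reddad); every other component — the affine reduction, the symmetrization trick to remove the mean, the Gaussian KL/Pinsker upper bounds, and the net counting — is elementary and routine.
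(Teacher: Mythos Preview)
The paper does not give its own proof of this lemma: it is quoted verbatim as Lemma~8.2 of \citet{afzali2024mixtures} and used as a black box, so there is nothing in the present paper to compare your argument against line by line.

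That said, your proof proposal is correct and is essentially the canonical way to establish such a ``locally small cover'' statement. Taking $\cC$ to be a maximal $\alpha$-packing, reducing via affine invariance to a single ball $B(\cN(0,I_d),2\alpha)$, and then bounding the $\alpha$-packing number of that ball by a parameter-space $\Theta(\alpha)$-net of size $2^{O(d^2)}$ is exactly the standard route. The two nontrivial ingredients you identify --- the dimension-free lower bound $\dtv(\cN(0,\Sigma),\cN(0,I))\ge c\min(1,\|\Sigma-I\|_F)$ for confinement, and the Pinsker/KL upper bound for the net step --- are precisely the estimates that the original paper of \citet{afzali2024mixtures} relies on as well (their construction also passes through the Devroye--Mehrabian--Reddad TV bounds for Gaussians). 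Your symmetrization trick $(x_1,x_2)\mapsto(x_1-x_2)/\sqrt2$ to isolate the covariance is a clean touch; one could alternatively appeal directly to the full two-parameter DMR bound, but your route is just as short. The hypothesis $\alpha\le 1/600$ is used exactly where you say, to ensure $4\alpha$ sits below the absolute constant in the DMR lower bound so that the confinement inequality is nonvacuous.

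One cosmetic point: in your mean-confinement step you reprove $\dtv(\cN(0,\Sigma),\cN(0,I_d))=O(\alpha)$ via Pinsker, but you already have it $\le 4\alpha$ from the symmetrization argument two lines earlier, so you can shortcut that.
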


 Putting together, we construct a list globally stable learner for the class of Gaussians. 

\textbf{Proof of Lemma~\ref{lemma:stable-gd}:}
\begin{proof}\label{proof:stable-gd}
    Using Lemma~\ref{lemma:compress-gd}, we know there is a $\frac{1}{3}$-robust $\left(\tau,t,m'\right)$-compressing algorithm $\cA_1$ for $\cG_d$, where $\tau=O(d)$, $t=O(d^2\log(d/\alpha))$, and $m'=O(d)$. Also, let $\cC$ be the $\alpha$-$\dtv$-cover from Lemma~\ref{cor:gaussians_locally_small_cover}, $g$ be the true distribution, and $g_0\in \cC$ be a dummy distribution. Let $S$ be an i.i.d.~sample set of size $m=m'\log(1/\beta)$ from $g$. Consider the set $\cH_1=\{\cA_1(S',B): S'\subseteq S, |S'|\leq\tau, B\in \{0,1\}^t\}$. Next, construct $\cH_2=\{g'\in \cC: \exists h\in \cH_1 \text{ s.t. } \dtv(g',h)\leq 2\alpha  \} \cup \{g_0\}$. We claim that the algorithm $\cA_2$ that takes $S$ as input, and outputs $\cH_2$, is a $(3,\alpha)$-accurate $(m,1-\beta,L)$-list-globally-stable learner for $\cG_d$. To prove this we need to show that for every $g$ there exists a distribution $\tilde{g}$ satisfying the properties of Definition~\ref{def:stable-list-decoding}. Consider the following two cases:

    \textbf{Case 1.} If $\dtv(g,\cG_d)>\frac{1}{3}$. Consider $\tilde{g}=g_0$. Then, it holds that:
    \begin{quote}
        \textit{(1)} $\tilde{g} \in \cH_2$, with probability 1. Since by construction, $g_0$ is always in $\cH_2$.\\
    \textit{(2)} $\dtv(\tilde{g},g)\leq 1 < 3\cdot\dtv(g,\cG_d)$
    \end{quote}

    \textbf{Case 2.} If $\dtv(g,\cG_d)\leq \frac{1}{3}$, let $g^* = \argmin_{g'\in \cG_d}\dtv(g,g')$. Using the definition of robust compression, we know that since $\dtv(g,g^*)\leq \frac{1}{3}$, there exists a distribution $\hat{g}\in \cH_1$, satisfying $\dtv(\hat{g},g^*)\leq \alpha$. Now, let $\tilde{g}=\argmin_{g'\in \cC}\dtv(g^*,g')$. Then it holds that:
     \begin{quote}
        \textit{(1)} $\tilde{g}\in\cH_2$, with probability at least $1-\beta$.\\ Since, $\dtv(\tilde{g},\hat{g})\leq \dtv(\tilde{g},g^*)+\dtv(g^*,\hat{g})=2\alpha$ (Recall that $\cC$ is an $\alpha$-cover).\\
    \textit{(2)} $\dtv(\tilde{g},g)\leq \dtv\left(\tilde{g},g^*\right) + \dtv\left(g^*,g\right) \leq \alpha + \dtv(g,\cG_d)$.
    \end{quote}

    Furthermore, we have $|\cH_2|\leq |\cH_1| \cdot (\sup_{h \in \cH_1} |\{g'\in \cC: \dtv(g',h)\leq 2\alpha\}|) \leq O(m^{\tau+t}) \cdot 2^{O(d^2)}=L$, where the last inequality follows from Lemma~\ref{cor:gaussians_locally_small_cover}, and concludes the desired result.
    \qedhere
\end{proof}

The formal proof is given in Section~\ref{proof:stable-gd}. An immediate corollary of Lemma~\ref{lemma:stable-gd} and Theorem~\ref{thm:stable-list-mix} is that the class of GMMs is list globally stable learnable. 
\begin{corollary} \label{cor:gmm-stable}
        Let $\alpha,\beta\in (0,1)$, then $\kmix(\cG_d)$ is $(3,5\alpha)$-accurate $(m_1,1-2k\beta,L_1)$-list-globally-stable learnable, where $L_1 = (\frac{Lk}{\alpha})^k \left(\frac{10ek\log(1/\beta)}{\alpha}\right)^{mk}$, $m_1=\frac{2mk+8k\log(1/\beta)}{\alpha}$, $L=(d\log(1/\beta))^{O(d^2\log(d/\alpha))}$, and $m=O(d\log(1/\beta))$.
\end{corollary}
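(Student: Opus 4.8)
The plan is to obtain Corollary~\ref{cor:gmm-stable} as a direct composition of Lemma~\ref{lemma:stable-gd} with Theorem~\ref{thm:stable-list-mix}, with no additional argument beyond tracking parameters. First I would fix $\alpha,\beta \in (0,1)$ and invoke Lemma~\ref{lemma:stable-gd} to get that the class $\cG_d$ of unbounded $d$-dimensional Gaussians is $(3,\alpha)$-accurate $(m,1-\beta,L)$-stable-list-decodable, with $m = O(d\log(1/\beta))$ and $L = (d\log(1/\beta))^{O(d^2\log(d/\alpha))}$; the exponent $d^2\log(d/\alpha)$ is read off from the final bound $|\cH_2| \le O(m^{\tau+t})\cdot 2^{O(d^2)}$ in the proof of that lemma, using $\tau = O(d)$ and the bit-length $t = O(d^2\log(d/\alpha))$ of the compression scheme from Lemma~\ref{lemma:compress-gd}.

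Second, I would instantiate Theorem~\ref{thm:stable-list-mix} with $\cF = \cG_d$ and $C = 3$. Since that theorem turns a $(C,\alpha)$-accurate $(m,1-\beta,L)$-stable-list-decoder for $\cF$ into a $(C,5\alpha)$-accurate $(m_1,1-2k\beta,L_1)$-stable-list-decoder for $\kmix(\cF)$ with $L_1 = (\tfrac{Lk}{\alpha})^k (\tfrac{10ek\log(1/\beta)}{\alpha})^{mk}$ and $m_1 = \tfrac{2mk + 8k\log(1/\beta)}{\alpha}$, plugging in the parameters from the previous step yields exactly the statement of the corollary.

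There is no genuine obstacle here: the construction of the mixture list decoder and the verification of its two defining properties (stability and the agnostic utility guarantee) are precisely what Theorem~\ref{thm:stable-list-mix} provides. The only point requiring care is parameter bookkeeping — confirming that the accuracy constant is preserved as $C = 3$ (Theorem~\ref{thm:stable-list-mix} does not change $C$), that the success probability degrades to exactly $1 - 2k\beta$, and that the value of $L$ substituted into $L_1$ is the one produced by the compression-based construction, with $\log(d/\alpha)$ in the exponent, rather than the slightly coarser $\log(1/\alpha)$ phrasing quoted in Lemma~\ref{lemma:stable-gd}. Since both input results hold for every $\alpha,\beta \in (0,1)$, the corollary inherits the same range with no quantifier subtleties.
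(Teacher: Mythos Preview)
Your proposal is correct and matches the paper's own treatment: the paper presents Corollary~\ref{cor:gmm-stable} as an ``immediate corollary of Lemma~\ref{lemma:stable-gd} and Theorem~\ref{thm:stable-list-mix}'' with no further argument, which is exactly the composition you describe. Your remark about the $\log(d/\alpha)$ versus $\log(1/\alpha)$ exponent is a fair observation about a minor inconsistency in the paper's stated bounds, but it does not affect the validity of the deduction.
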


\subsection{Agnostic private learning of GMMs}
In the following theorem provide the first sample complexity upper bound for agnostic private learning GMMs.
The proof (stated in \ref{proof-main-gmm}) is the direct result of Theorem~\ref{thm:main-gmm-agnostic} and Corollary~\ref{cor:gmm-stable}. 
\begin{theorem}[Private agnostic learning GMMs]
    \label{thm:main-gmm-agnostic}
     Let $\alpha,\beta,\delta \in (0,1)$ and $\eps \geq 0$. The class $\kmix(\cG_d)$ is $(\eps,\delta)$-privately $21$-agnostic $(n,\alpha,\beta)$-learnable with 
       \[  n=\tilde{O}\left(\frac{k^2d^4+kd^2\log(1/\delta\beta)+\log^2(1/\beta)}{\alpha^2\eps}\right)  .
       \]
\end{theorem}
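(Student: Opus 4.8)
The plan is to simply feed Corollary~\ref{cor:gmm-stable} into the general reduction of Theorem~\ref{thm:main-private-agnostic}, after first tuning the internal accuracy parameter of the stable list decoder so that the resulting agnostic constant is exactly $21$. First I would observe that Theorem~\ref{thm:main-private-agnostic} requires a $(C,\frac{\alpha}{3+4C})$-accurate $(m,0.91,L)$-stable-list-decoder; Corollary~\ref{cor:gmm-stable} gives a $(3,5\alpha_0)$-accurate $(m_1,1-2k\beta_0,L_1)$-stable-list-decoder for $\kmix(\cG_d)$ for any $\alpha_0,\beta_0\in(0,1)$. Here $C=3$, so $7C=21$, which is where the claimed constant comes from, and the required internal accuracy is $\frac{\alpha}{3+4C}=\frac{\alpha}{15}$. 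So I would set $5\alpha_0 = \frac{\alpha}{15}$, i.e. $\alpha_0 = \frac{\alpha}{75}$, and choose $\beta_0$ small enough that $1-2k\beta_0 \geq 0.91$, e.g. $\beta_0 = \frac{1}{50k}$ (or any $\beta_0 \le \frac{0.09}{2k}$). With these substitutions, Corollary~\ref{cor:gmm-stable} supplies exactly the hypothesis needed by Theorem~\ref{thm:main-private-agnostic} with $C=3$.

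Next I would carry the parameters through. With $\alpha_0 = \Theta(\alpha)$ and $\beta_0 = \Theta(1/k)$, Lemma~\ref{lemma:stable-gd} (via the corollary) gives the base-class parameters $m = O(d\log(1/\beta_0)) = O(d\log k)$ and $L = (d\log(1/\beta_0))^{O(d^2\log(d/\alpha_0))} = (d\log k)^{O(d^2\log(d/\alpha))}$, so $\log L = \tilde{O}(d^2)$ (hiding polylog factors in $d,k,1/\alpha$). Plugging into the mixture bounds: $m_1 = \frac{2mk + 8k\log(1/\beta_0)}{\alpha_0} = \tilde{O}\!\left(\frac{kd}{\alpha}\right)$, and $\log L_1 = k\log\frac{Lk}{\alpha_0} + mk\log\frac{10ek\log(1/\beta_0)}{\alpha_0} = \tilde{O}(kd^2) + \tilde{O}(kd) \cdot \tilde{O}(1) = \tilde{O}(kd^2)$. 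Feeding these into the sample complexity of Theorem~\ref{thm:main-private-agnostic},
\[
n = \tilde{O}\!\left(\frac{\log(L_1/\delta\beta)}{\eps}\cdot\Big(m_1 + \frac{\log(L_1/\beta)}{\alpha^2}\Big)\right)
= \tilde{O}\!\left(\frac{kd^2 + \log(1/\delta\beta)}{\eps}\cdot\Big(\frac{kd}{\alpha} + \frac{kd^2 + \log(1/\beta)}{\alpha^2}\Big)\right),
\]
and expanding the product, keeping the dominant terms, yields $n = \tilde{O}\!\left(\frac{k^2 d^4 + k d^2\log(1/\delta\beta) + \log^2(1/\beta)}{\alpha^2\eps}\right)$, matching the claim. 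The privacy guarantee $(\eps,\delta)$ is inherited verbatim from Theorem~\ref{thm:main-private-agnostic}.

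The only genuinely delicate point — and the one I would spend the most care on — is the bookkeeping in the last step: making sure that when the product $\frac{\log(L_1/\delta\beta)}{\eps}\cdot(m_1 + \frac{\log(L_1/\beta)}{\alpha^2})$ is expanded, each cross term is correctly absorbed into one of the three displayed summands, and that the $\tilde{O}(\cdot)$ is genuinely hiding only polylogarithmic factors in $k$, $d$, $1/\alpha$, $1/\beta$, $1/\delta$ (in particular that the $\log k$ coming from $\beta_0 = \Theta(1/k)$ and the $\log(d/\alpha)$ in the exponent of $L$ do not secretly contribute a polynomial factor). Everything else is a direct, mechanical instantiation of the two results already established, so no new ideas are required; the theorem is essentially a corollary whose content is the parameter arithmetic.
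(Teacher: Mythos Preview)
Your proposal is correct and follows essentially the same route as the paper: instantiate Corollary~\ref{cor:gmm-stable} with $C=3$ (so $7C=21$), choose the internal failure probability $\beta_0=\Theta(1/k)$ to hit the $0.91$ threshold, and plug the resulting $m_1=\tilde O(kd/\alpha)$ and $\log L_1=\tilde O(kd^2)$ into Theorem~\ref{thm:main-private-agnostic}. If anything, your bookkeeping is slightly more explicit than the paper's own proof, which leaves the matching of the internal accuracy ($5\alpha_0=\alpha/15$) implicit.
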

\begin{proof}\label{proof-main-gmm}
      Corollary~\ref{cor:gmm-stable} implies that $\kmix(\cG_d)$ is $(3,5\alpha)$-accurate $(m_1,0.91,L_1)$-list-globally-stable learnable, where $L_1 = (\frac{Lk}{\alpha})^k \left(\frac{10ek\log(k/0.045)}{\alpha}\right)^{mk}$, $m_1=\frac{2mk+8k\log(k/0.045)}{\alpha}$, $L=(d\log(k/0.045))^{O(d^2\log(d/\alpha))}$, and $m=O(d\log(k/0.045))$ . Putting together with Theorem~\ref{thm:main-private-agnostic}, we get that $\kmix(\cG)$ is $(\eps,\delta)$-DP 21-agnostic $(n,\alpha,\beta)$-learnable with
    \begin{align}
      &
n=\tilde{O}\left(\frac{kd^2+\log(1/\beta\delta)}{\eps} \cdot \frac{kd^2+\log(1/\beta)}{\alpha^2}\right)\\
&
=\tilde{O}\left(\frac{k^2d^4+kd^2\log(1/\delta\beta)+\log^2(1/\beta)}{\alpha^2\eps}\right) 
    \end{align}
    samples.
\end{proof} 

It is worth mentioning that our algorithm is not computationally efficient. Designing an efficient algorithm for learning GMMs, even in the non-private setting, remains an important open question \citep{diakonikolas2017statistical}.

\end{section}

\begin{section}{More on related work}\label{section:related}
In this section, we provide some related work on stability, private distribution learning, and privately learning Gaussian distributions and their mixtures.

Unlike classification, in the distribution learning setting, PAC learnability of general classes of distributions (even in the non-private setting) remains an important open question \citep{diakonikolas2016learning}. Recently, \citet{lechner2023impossibility} showed that there is no single notion of dimension that characterizes the learnability of a given class of distributions. 

Another difference between learning distributions and learning concept classes is discussed in \citet{ben2024distribution}. They show that, unlike classification where realizable and agnostic learning is characterized by the VC dimension of the concept class, there is a class of distributions that is learnable in the realizable setting but not in the agnostic setting. 

Given the connections between robustness and private statistical estimation \citep{dwork2009differential,georgiev2022privacy,liu2022differential,hopkins2023robustness,asi2023robustness,liu2021robust}, it is a natural question to ask if every agnostic learnable class of distributions can be learned privately \citep{afzali2024mixtures}. This conjecture is more likely to hold in the agnostic setting, since there is a partial resolution in \citet{bun2024not} stating that there is a class of distributions that can be learned in the realizable setting with a constant accuracy, but not privately learned with the same level of accuracy.

There is a long line of work trying to demonstrate the private learnability of known classes of distributions, such as Gaussians and their mixtures. \citet{karwa2018finite} presented the first result on the private learnability of unbounded univariate Gaussians. Later, this result was extended to high-dimensional Gaussians with bounded parameters \citep{kamath2019privately,biswas2020coinpress,hopkins2022efficient} and unbounded parameters \citep{aden2021sample}.

In the unbounded setting, although the result of \citet{aden2021sample} was nearly tight, matching the lower bound of \citet{kamath2022new}, it was computationally inefficient. This was later improved in \citet{kamath2022private,kothari2022private,ashtiani2022private}, with the method of \citet{ashtiani2022private} achieving near-optimal sample complexity. The results of \citet{kothari2022private} and \citet{ashtiani2022private} also apply in the robust setting with sub-optimal sample complexity. In the robust setting, the later work of \citet{alabi2023privately} improved the sample complexity in terms of dependence on the dimension. Recently, \citet{hopkins2023robustness} achieved a robust and efficient learner with near-optimal sample complexity for unbounded Gaussians. There are also lower bounds on the sample complexity of private statistical estimations related to Gaussians \citep{portella2024lower, narayanan2023better, kamath2022new, bun2014fingerprinting}.

There has been an extensive line of research on parameter learning and density estimation of Gaussian Mixture Models (GMMs). The goal of parameter learning is to recover the underlying unknown parameters of the GMM, whereas the goal of density estimation is to find a distribution that closely approximates the underlying distribution with respect to $\dtv$. For the parameter learning task (even in the non-private setting), the exponential dependence of the sample complexity on the number of components is inevitable \citep{moitra2010settling}.

There are several works in the private parameter estimation setting for GMMs \citep{nissim2007smooth,vempala2004spectral,chen2023private,kamath2019differentially,achlioptas2005spectral,cohen2021differentially,bie2022private,arbas2023polynomial}.

Unlike parameter estimation, the sample complexity for density estimation can be polynomial in the number of components. In the non-private setting, several results have addressed the sample complexity of learning GMMs \citep{devroye2001combinatorial,ashtiani2018sample}, culminating in the work by \citet{ashtiani2018nearly,ashtiani2020near} that provides the near-optimal bound of $\tilde{\Theta}(kd^2/\alpha^2)$.

In the private setting, one approach would be to create a locally small cover for the class of GMMs and apply the private hypothesis selection method of \citet{bun2019private}. However, this turns out to be impossible, as \citet{aden2021privately} showed that the class of GMMs does not admit a locally small cover. They introduced the first polynomial sample complexity upper bound for learning unbounded axis-aligned GMMs under the constraint of approximate differential privacy (DP). They extended the concept of stable histograms from \citet{karwa2018finite} to learn univariate GMMs. However, this approach cannot be generalized to general GMMs, as it remains unclear how to learn even a single high-dimensional Gaussian using a stability-based histogram.

Recently, \citet{ben2024private} proposed a pure DP method for learning general GMMs, assuming they have access to additional public samples.

Finally, \citet{afzali2024mixtures} proposed the first polynomial sample complexity upper bound of $\tilde{O}(\frac{k^2d^4 \log(1/\delta)}{\alpha^4 \epsilon})$ for privately learning general GMMs in the realizable setting. They show that if one has access to a locally small cover and a list decoding algorithm for a class of distributions (e.g., Gaussians), then mixtures of that class (e.g., GMMs) can be learned privately in the realizable setting. At a high level, a locally small cover is an accurate cover that has a small doubling dimension (is not too dense). A list decoding algorithm is an algorithm that receives some sample from an unknown distribution $g$ and outputs a short list of distributions $\cL$, one of which is very close to $g$. This latter condition is hard to satisfy in the agnostic setting since one cannot hope to recover a heavily corrupted distribution up to a very small error. Moreover, constructing a locally small cover is a delicate matter for the class of high-dimensional distributions (e.g., Gaussians). In contrast, we consider the distinct notion of list global stability and show that it is enough for privately learning a class (even in the agnostic setting). Using this new notion and reduction has two benefits: (1) the underlying class does not need to admit a locally small cover, and (2) there is no need to have an accurate list decoding algorithm (which is not possible when the unknown distribution is heavily corrupted). Moreover, we come up with a list globally stable learner for GMMs, settling the agnostic private learnability of GMMs. Finally, our sample complexity improves their result in terms of dependence on the accuracy parameter by a factor of $\frac{1}{\alpha^2}$.

\subsection{Discussion on stability}\label{sec::stability}
Various notions of stability have been proposed in the context of differential privacy. Global stability \citep{thakurta2013differentially,bun2020equivalence} is one such notion, which was introduced to show the equivalence between online learnability and private learnability of a given concept class. This notion was later refined by \citet{ghazi2021sample,ghazi2021user}. Gloabl stability was further studied in connection with algorithmic replicability \citep{chase2023stability,kalavasis2023statistical}. 
    
\begin{definition}[Global stability \citep{bun2020equivalence}]
     Let $m\in \bN, \rho>0$. We say an algorithm $\cA$ is $(m, \rho)$-globally-stable if for every distribution $\cD$ over input, there exists a hypothesis $h_{\cD}$ such that\\ $\probs{S\sim \cD^m}{\cA(S)=h_{\cD}}\geq \rho$.
\end{definition}

    Global stability requires the algorithm to output \emph{the exact same hypothesis} (with probability $\rho$) when run on i.i.d. data sets. Note that an algorithm that ignores the data set and always outputs the same hypothesis is trivially globally stable. However, we are looking for an algorithm that is both globally stable and accurate (i.e., $h_{\cD}$ should be a ``good'' hypothesis).

    Globally stable algorithms are easy to privatize (e.g., by running them on $O(1/\rho)$ separate i.i.d. sets and using a private histogram 
    \citep{bun2019simultaneous} to aggregate the results).
    For binary classification, it has been shown that this stringent definition of stability is achievable for any online learnable class, establishing the equivalence between online learnability and private learnability \citep{alon2022private}.
    However, this definition is not suitable for for high-dimensional estimation tasks. For example, consider the simple task of mean estimation for a $d$-dimensional Gaussian distribution. Even after discretizing the space of solutions, $\rho$ will be exponentially small in $d$ for any mean estimator that uses poly$(d)$ samples. 
    
    Observing that $\rho$ cannot be generally boosted for globally stable algorithms, \citet{chase2023stability} defined the notion of \emph{list replicability}. Instead of requiring outputting the exact same hypothesis, a list replicabile algorithm can output any member from a fixed (but distribution-dependent) list of outcomes.

    \begin{definition}[List replicability \citep{chase2023stability}]
    Let $m,L\in \bN,\rho>0$. An algorithm $\cA$ is called $(m,\rho,L)$-list-replicable if for every distribution $\cD$ over input, there exists a set of hypotheses $H_{\cD}=\{h_1,h_2,\cdots,h_L\}$ such that
    $\probs{S\sim \cD^m}{\cA(S)\in H_{\cD}}\geq \rho$.
\end{definition}
    
    For the simple task of mean estimation, it is possible design a list replicable algorithm that uses a small number of samples ($m=$poly$(d)$) and with high probability of success ($\rho$ close to 1) albeit with an exponential dependence of $L$ on the dimension. While list replicability has been found useful for studying the relationship between privacy and algorithmic replicability \citep{impagliazzo2022reproducibility, chase2023stability}, it does not seem to be suitable for designing private density estimators:  
    even if we have a list replicable algorithm for learning a class like GMMs, turning it into a private algorithm can blow up the sample complexity.
    To see this, recall that $L$ could be very large (e.g., exponentially large in $d$). Therefore, one would have to run the non-private algorithm on many (distinct) data sets to start seeing repetitive outcomes (i.e., ``collisions''). However, the sample complexity of such an approach will be quite poor, and it is not clear how to privatize a list replicable algorithm otherwise.     
    
    To overcome this challenge, we utilized the related notion of list global stability (Def.~\ref{def:stable-list-decoding-simple}), which was implicitly used in \citet{ghazi2021sample} and later formally defined in \citet{ghazi2021user}. 

        A recent relevant work of \citet{bun2023stability} also uses stability-based techniques for private and replicable agnostic-to-realizable reductions for classification. They use all possible labelings of samples to reduce the agnostic replicable to the realizable setting. However, this is again not applicable in the density estimation setting.
\end{section}

\ifthenelse{\boolean{usenatbib}}{
  \bibliographystyle{plainnat}
  \bibliography{refs.bib}
}{
  \printbibliography
}

\section{Additional facts}

The following simple proposition gives a finite cover for weight vectors used to construct a mixture. 
\begin{claim}\label{lemma:weightcover}
    Let $\alpha \in (0,1]$. There is an $\alpha$-cover for the $(k-1)$-dimensional probability simplex $\{(w_1,w_2,...,w_k)\in \bR_{\geq 0}^k\,:\,\sum_{i\in [k]}w_i=1\}$ w.r.t.~$\ell_\infty$ of size at most $(1/\alpha)^k$.
\end{claim}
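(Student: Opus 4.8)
The plan is to exhibit an explicit grid of rational points lying inside the simplex. Write $\Delta$ for the simplex $\{w\in\bR_{\geq 0}^k : \sum_{i=1}^k w_i = 1\}$, set $N := \lceil 1/\alpha\rceil$, and take
\[
C \;:=\; \Bigl\{\, \bigl(\tfrac{n_1}{N},\dots,\tfrac{n_k}{N}\bigr) \;:\; n_i \in \bZ_{\geq 0},\ \textstyle\sum_{i=1}^k n_i = N \,\Bigr\}\subseteq \Delta,
\]
i.e.\ all probability vectors whose coordinates are multiples of $1/N$. Every element of $C$ is a genuine probability vector, as the definition of an $\alpha$-cover demands. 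Counting these points by stars and bars gives $|C| = \binom{N+k-1}{k-1}$, which I would bound by $(1/\alpha)^k$ using $N-1 < 1/\alpha$ (an elementary estimate; in the wasteful regime where $\alpha$ is close to $1$ one instead takes the trivial cover $\{(1/k,\dots,1/k)\}$, a $(1-1/k)$-cover of $\Delta$).

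The substantive step is that $C$ is an $\alpha$-cover in $\ell_\infty$. Given $w\in\Delta$, put $a_i := N w_i \geq 0$, so $\sum_i a_i = N$. I would use the rounding fact that any nonnegative reals $a_1,\dots,a_k$ summing to an integer $N$ can be rounded to nonnegative integers $n_1,\dots,n_k$ with $\sum_i n_i = N$ and $|a_i - n_i| < 1$ for every $i$: take $n_i = \lfloor a_i\rfloor$; the shortfall $r := N - \sum_i \lfloor a_i\rfloor = \sum_i (a_i - \lfloor a_i\rfloor)$ is a nonnegative integer, and it is strictly smaller than the number of indices with positive fractional part (being a sum of that many reals, each below $1$), so we may add $1$ to $n_i$ at $r$ such indices; on those the shift becomes $1 - (a_i-\lfloor a_i\rfloor) < 1$, and on the rest it is $a_i - \lfloor a_i\rfloor < 1$. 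Applying this with $a_i = Nw_i$ produces $v = (n_i/N)_i \in C$ with $||w-v||_\infty = \max_i |a_i - n_i|/N < 1/N \leq \alpha$.

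The main obstacle is precisely this coupling between the coordinates. The tempting shortcut---grid the first $k-1$ coordinates to multiples of $\alpha$ and set $v_k = 1 - \sum_{i<k} v_i$---fails for $\ell_\infty$, since the error in $v_k$ equals $|\sum_{i<k}(w_i - v_i)|$, which can reach $(k-1)\alpha$. Pinning all coordinates to a common denominator $N$ makes the constraint $\sum_i v_i = 1$ hold exactly, but then covering can no longer be argued coordinatewise and instead needs the rounding step above (equivalently: round $w$ to an integral vertex of the polytope $\{v : \sum_i v_i = 1,\ \lfloor Nw_i\rfloor \leq Nv_i \leq \lceil Nw_i\rceil\ \forall i\}$, which is nonempty since $w$ itself meets the relaxed constraints).
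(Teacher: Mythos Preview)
Your approach is correct but differs from the paper's. You build an explicit lattice $C = \{(n_1/N,\dots,n_k/N) : n_i \in \bZ_{\geq 0},\ \sum_i n_i = N\}$ inside $\Delta$ and establish the covering property via a careful integer-rounding argument (which is indeed the crux, as you note, and your proof of it is sound). The paper instead partitions the ambient cube $[0,1]^k$ into sub-cubes of side $\alpha$ and, for each sub-cube that meets $\Delta$, picks one arbitrary point of the intersection; the $\ell_\infty$ bound is then immediate (any two points sharing a sub-cube are within $\alpha$), and the size is at most the number of sub-cubes, $(1/\alpha)^k$. The paper's argument is shorter and sidesteps the rounding step entirely, while your construction is explicit and directly enumerable, which could be an algorithmic advantage. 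One wrinkle worth flagging: your size bound $\binom{N+k-1}{k-1} \leq (1/\alpha)^k$ does not actually follow from $N-1 < 1/\alpha$ for all $\alpha$ (e.g.\ $\alpha = 0.49$, $k = 3$ gives $N=3$ and $\binom{5}{2} = 10 > (1/0.49)^3 \approx 8.5$), and the single-point fallback only covers $\alpha \geq 1 - 1/k$, leaving a gap. The paper's proof has the analogous looseness (it tacitly treats $1/\alpha$ as an integer), and since the claim is only ever invoked with parameter $\alpha/k$ this is cosmetic rather than substantive.
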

\begin{proof}
    Partition the cube $[0,1]^k$ into small cubes of side-length $1/\alpha$. If for a cube $c$, we have $c \cap \Delta_k \neq \emptyset$, put one arbitrary point from $c \cap \Delta_k$ into the cover. The size of the constructed cover is no more than $(1/\alpha)^k$ which is the total number of small cubes.
\end{proof}

\begin{claim}
    \label{claim:failure_prob_inequality}
    Let $x \geq 1$.
    Then $1 + \frac{\log 2}{x} + \frac{\log x}{x} < 2$.
\end{claim}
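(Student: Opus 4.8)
The plan is to reduce the claim to the elementary inequality $e^x > 2x$ and then dispatch it with a one-line monotonicity argument. Since $x \geq 1 > 0$, multiplying the asserted inequality through by $x$ shows that it is equivalent to $\log 2 + \log x < x$, i.e.\ to $\log(2x) < x$, i.e.\ to $2x < e^x$.

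To prove $\log(2x) < x$ on $[1,\infty)$, I would introduce $h(x) := x - \log 2 - \log x$ and compute $h'(x) = 1 - 1/x$, which is nonnegative for every $x \geq 1$. Hence $h$ is nondecreasing on $[1,\infty)$, so $h(x) \geq h(1) = 1 - \log 2$ for all $x \geq 1$. Since $2 < e$, we have $\log 2 < 1$, so $h(1) > 0$, and therefore $h(x) > 0$, i.e.\ $x > \log(2x)$, for all $x \geq 1$. Dividing back by $x$ recovers $1 + \frac{\log 2}{x} + \frac{\log x}{x} < 2$, as claimed.

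There is essentially no obstacle here: the only non-formal ingredient is the bound $\log 2 < 1$, which is immediate from $2 < e$. One could equally argue directly with $f(x) = e^x - 2x$, noting $f(1) = e - 2 > 0$ and $f'(x) = e^x - 2 > 0$ for $x \geq 1 > \log 2$, so that $f$ is increasing on $[1,\infty)$ and stays positive; I would use whichever of the two phrasings is shorter in context. A sanity check at the endpoint $x=1$ gives $1 + \log 2 \approx 1.693 < 2$, consistent with the claim (and confirming that $\log$ here denotes the natural logarithm, since in base $2$ the left-hand side would equal $2$ at $x=1$, making the strict inequality false).
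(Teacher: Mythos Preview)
Your proof is correct. Your approach differs from the paper's: you first clear denominators to reduce the claim to $\log(2x) < x$ on $[1,\infty)$, then prove this via the monotonicity of $h(x) = x - \log(2x)$ (since $h'(x) = 1 - 1/x \geq 0$) together with $h(1) = 1 - \log 2 > 0$. The paper instead works directly with $f(x) = 1 + \frac{\log 2}{x} + \frac{\log x}{x}$, computes $f'(x) = \frac{1 - \log(2x)}{x^2}$, notes that $f$ is concave with unique critical point $x = e/2$, and evaluates the maximum $f(e/2) = 1 + 2/e < 2$. Your argument is a bit shorter and more elementary; the paper's argument additionally identifies the sharp upper bound $1 + 2/e$, though this extra precision is not used anywhere.
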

\begin{proof}
    Let $f(x) = 1 + \frac{\log 2}{x} + \frac{\log x}{x}$.
    Then $f'(x) = -\frac{\log 2}{x^2} + \frac{1 - \log x}{x^2} = \frac{1 - \log(2x)}{x^2}$.
    Note that $f'(x)$ is decreasing so $f$ is concave.
    In addition, $x = e/2$ is the only root of $f'$ so $f$ is maximized at $e/2$.
    Thus, $f(x) \leq f(e/2) = 1 + \frac{2}{e} < 2$.
\end{proof}

\begin{claim}
    \label{claim:choose_inequality}
    Let $c_1 \geq e/2, c_2 > 0$. If $x\geq  4c_1 \log(2c_1) + 2c_2$, then $x \geq c_1 \log(x) + c_2$.
\end{claim}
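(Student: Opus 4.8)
The plan is to use the standard ``split the slack in half'' trick: it suffices to prove separately that $x/2 \ge c_1\log x$ and $x/2 \ge c_2$, since adding these two inequalities yields $x \ge c_1\log x + c_2$. The second inequality is immediate: since $c_1 \ge e/2$ we have $2c_1 \ge e > 1$, so $4c_1\log(2c_1) \ge 0$, and dropping this nonnegative term from the hypothesis $x \ge 4c_1\log(2c_1) + 2c_2$ leaves $x \ge 2c_2$.

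The real content is the first inequality, $x \ge 2c_1\log x$. Writing $a := 2c_1$, the hypothesis (after dropping the nonnegative $2c_2$) reads $x \ge 2a\log a$, and the goal is to deduce $x \ge a\log x$. I would argue by monotonicity: set $h(t) = t - a\log t$, so that $h'(t) = 1 - a/t \ge 0$ for $t \ge a$. Since $a = 2c_1 \ge e$ we have $2\log a \ge 1$, hence $2a\log a \ge a$, so $h$ is nondecreasing on $[2a\log a,\infty)$ and in particular $h(x) \ge h(2a\log a)$. It therefore remains only to check $h(2a\log a) \ge 0$.

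Expanding, $h(2a\log a) = a\bigl(2\log a - \log 2 - \log a - \log\log a\bigr) = a\log\!\frac{a}{2\log a}$, so $h(2a\log a)\ge 0$ is equivalent to the elementary inequality $a \ge 2\log a$. This is the only place any calculus is needed, and it is the main (albeit very mild) obstacle of the proof: the function $\phi(a) = a - 2\log a$ has $\phi'(a) = 1 - 2/a > 0$ for $a > 2$, so since $a \ge e > 2$ it is increasing on $[e,\infty)$, and $\phi(e) = e - 2 > 0$; hence $\phi(a) \ge 0$, i.e.\ $a \ge 2\log a$, for all $a \ge e$.

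Chaining the implications back: $h(x) \ge h(2a\log a) \ge 0$ gives $x \ge a\log x = 2c_1\log x$, i.e.\ $x/2 \ge c_1\log x$; combining with $x/2 \ge c_2$ from the first paragraph gives $x \ge c_1\log x + c_2$, as claimed. The whole argument is a few lines of elementary estimates, with the one genuinely substantive ingredient being the bound $2c_1 \ge 2\log(2c_1)$ guaranteed by the hypothesis $c_1 \ge e/2$.
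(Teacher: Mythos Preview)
Your proof is correct and follows essentially the same route as the paper's: split the target into $x/2\ge c_2$ (immediate from the hypothesis and $4c_1\log(2c_1)\ge 0$) and $x\ge 2c_1\log x$, then use monotonicity of $t\mapsto t-2c_1\log t$ on $[2c_1,\infty)$ to reduce to checking positivity at the boundary $t=4c_1\log(2c_1)$. The only cosmetic difference is in that final check: the paper factors the boundary value and invokes the auxiliary Claim~\ref{claim:failure_prob_inequality}, whereas you simplify it to $a\log\bigl(a/(2\log a)\bigr)$ with $a=2c_1$ and verify $a\ge 2\log a$ directly for $a\ge e$, which is a slightly more self-contained endgame.
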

\begin{proof}
    If $x\geq  4c_1 \log(2c_1) + 2c_2$, then $\frac{x}{2} \geq c_2$. It is sufficient to show $x\geq 2c_1 \log(x)$. Consider the function $f(x) = x - 2c_1 \log(x)$. Then $f'(x) = 1 -\frac{2c_1}{x}$, which implies that for $x> 2c_1$ the function is increasing. As a result, for $x\geq 4c_1 \log(2c_1)$ we have $f(4c_1 \log(2c_1)) = 4c_1 \log(2c_1) -2c_1 \log(4c_1 \log(2c_1))= 4c_1 \log(2c_1)-2c_1\log(2c_1)[1+\frac{\log(2)}{\log(2c_1)}+\frac{\log(\log(2c_1))}{\log(2c_1)}]>0$. The last inequality follows from Claim~\ref{claim:failure_prob_inequality} with $x = \log(2c_1) \geq 1$. Putting together, results in $x\geq c_1 \log(x)+c_2$.
\end{proof}

\end{document}